\documentclass[letterpaper]{article} 
\usepackage{aaai2027}  
\nocopyright
\usepackage[hyphens]{url}  
\usepackage{graphicx} 
\usepackage{natbib}  
\usepackage{caption} 
\usepackage{booktabs}
\usepackage{flafter}
\usepackage{listings}
\usepackage{xcolor}
\usepackage[percent]{overpic}

\usepackage{amsmath,amssymb,amsfonts}
\usepackage{amsthm}

\newcommand{\SCSE}{\textsc{SCSE}}
\newcommand{\Block}{\mathcal{B}}
\newcommand{\R}{\mathbb{R}}
\newcommand{\norm}[1]{\lVert#1\rVert}
\newcommand{\Tmap}{\mathcal{T}}

\newtheorem{theorem}{Theorem}
\newtheorem{lemma}[theorem]{Lemma}
\newtheorem{corollary}[theorem]{Corollary}
\newtheorem{proposition}[theorem]{Proposition}
\theoremstyle{remark}
\newtheorem*{remark}{Remark}

\definecolor{lstbg}{HTML}{F8F9FB}
\definecolor{lstframe}{HTML}{C7CDD4}
\definecolor{lstnumber}{HTML}{6B7280}
\definecolor{lstkeyword}{HTML}{1F4E79}
\definecolor{lstcomment}{HTML}{5F6F52}
\definecolor{lststring}{HTML}{8A4B1F}
\definecolor{lstfunction}{HTML}{7A3E9D}
\definecolor{lstconstant}{HTML}{0F766E}
\definecolor{lstidentifier}{HTML}{111827}
\definecolor{consistencybg}{HTML}{F3F8FF}
\definecolor{consistencyframe}{HTML}{2F5B89}
\definecolor{baselinefigure}{HTML}{8E2A20}

\newcommand{\consistencybox}[1]{%
  \par\smallskip
  \noindent\fcolorbox{consistencyframe}{consistencybg}{%
    \begin{minipage}{\dimexpr\linewidth-2\fboxsep-2\fboxrule\relax}
      \small #1
    \end{minipage}}%
  \par\smallskip
}

\lstdefinestyle{pythoncode}{
  language=Python,
  basicstyle=\ttfamily\scriptsize,
  keywordstyle=\color{lstkeyword}\bfseries,
  commentstyle=\color{lstcomment}\itshape,
  stringstyle=\color{lststring},
  identifierstyle=\color{lstidentifier},
  emph={scse_step,scse_unroll,pow,sum,clamp_min,to},
  emphstyle=\color{lstfunction},
  emph={[2]None,True,False},
  emphstyle={[2]\color{lstconstant}\bfseries},
  backgroundcolor=\color{lstbg},
  columns=fullflexible,
  keepspaces=true,
  showstringspaces=false,
  breaklines=true,
  breakatwhitespace=true,
  frame=single,
  framerule=0.4pt,
  framesep=3pt,
  rulecolor=\color{lstframe},
  numbers=left,
  numberstyle=\tiny\ttfamily\color{lstnumber},
  numbersep=4.25pt,
  xleftmargin=1.0em,
  framexleftmargin=1.55em,
  aboveskip=0.6\baselineskip,
  belowskip=0.4\baselineskip,
  captionpos=b,
  tabsize=4,
  upquote=true
}

\title{Looped Transformers with Source-Centered State Evolution}
\author{
	Bum Jun Kim\corresponding,
	Kohei Hayashi,
	Shunsuke Kamiya,\\
	Masanori Koyama,
	Yusuke Iwasawa,
	Yutaka Matsuo
}
\affiliations{
	The University of Tokyo\\
	\{bumjun.kim, kohei.hayashi, shunsuke.kamiya, masanori.koyama, iwasawa, matsuo\}@weblab.t.u-tokyo.ac.jp
}

\begin{document}

\maketitle

\begin{abstract}
	Looped Transformers create a useful train- and test-time compute axis by reusing the same Transformer block over recurrent depth, increasing effective depth at a fixed parameter count.  However, that shared block must then govern an entire trajectory of varying hidden states over trained and extrapolated depths.  Furthermore, in additive-injection looped Transformers, an input-conditioned signal is reintroduced at every recurrent step, so applying the shared transition at an input-conditioned reference can still move the hidden state.  In this paper, we propose Source-Centered State Evolution (\SCSE{}), which is designed to reconcile input conditioning with reference-preserving shared recurrence.  Specifically, \SCSE{} retains input dependence through its learned anchor and initial deviation, allows nonzero deviations to drive recurrent computation while mapping zero deviation to zero, and guarantees exact anchor invariance through its zero-deviation mask.  The designated anchor is thereby a one-step fixed point by construction.  The zero-deviation forcing bias is the next deviation produced from the anchor itself and vanishes in \SCSE{}, while nonzero deviations remain active and support state-dependent recurrent computation.  Our theory shows that the zero-deviation forcing bias is a design degree of freedom whose task effect can be harmful, neutral, or beneficial; \SCSE{} resolves this choice in favor of exact anchor invariance by setting the bias to zero.  Across WikiText-2, WikiText-103, direct web-corpus pretraining, held-out web-text transfer, LAMBADA completion, adaptive-depth evaluation, context lengths from 128 to 1024, and shared-block comparisons up to 139.2M parameters, \SCSE{} improves the controlled recurrent quality frontier.  Ablation studies identify the learned anchor and the anchor-coordinate deviation recurrence as the primary contributors to the gain, and a trained-model case study grounds the anchor-response diagnostic in observed recurrent motion.
\end{abstract}

\section{Introduction}
Modern Transformer language models (LMs) build on residual depth and self-attention \citep{he2016deep,vaswani2017attention} and are usually scaled by increasing parameters, data, or training compute \citep{kaplan2020scaling,hoffmann2022training}.  Recent test-time-scaling work also studies inference compute as a separate scaling axis \citep{snell2024scaling}.  A complementary architectural line is looped Transformers, which reuse parameters recurrently in latent space, increasing effective depth without increasing recurrent-block parameters.  This direction is promising because looped Transformers turn depth into a reusable inference-time compute knob.  At inference time, the model can apply the same trained weights for additional recurrent passes through the same shared block, potentially with adaptive halting or budget selection.  Additional recurrent passes can improve algorithmic generalization, latent reasoning, or length extrapolation without storing a separate parameter set for every layer.

The reusable depth axis, however, requires a single shared transition to govern an entire trajectory of varying hidden states over trained and extrapolated depths.  Furthermore, additive-injection looped Transformers reintroduce an input-conditioned signal whenever the shared block is applied.  We call this recurrently reused input-conditioned signal the source; repeatedly injecting the source leaves the shared transition's response at a chosen input-conditioned reference unconstrained.  Even at that reference, another application of the shared transition can produce a source-driven update.  Recurrent propagation can contract, cancel, exploit, or coherently accumulate this update over loop depth, thereby yielding a source-driven degree of freedom.  The possibility of such depth-varying propagation places an additional consistency burden on shared recurrence: the same transition must remain useful across the states generated along its own trajectory.

We make this source-driven degree of freedom precise.  Let $e$ denote the input representation and let $h^\star(e)$ be a reference state computed once from $e$.  We call $h^\star(e)$ an input-conditioned anchor.  Its value depends on the input but remains fixed throughout the recurrent unroll, serving as the origin for measuring recurrent motion.  Define the anchor-relative deviation $\Delta_t=h_t-h^\star(e)$, where $h_t$ denotes the hidden state at recurrent step $t$.  Let $\Tmap_t(\Delta;e)$ denote the model's actual one-step map in these coordinates.  Evaluating $\Tmap_t$ at zero deviation, corresponding to the input-conditioned anchor, gives the next deviation
\begin{equation}
	b_t(e):=\Tmap_t(0;e),
\end{equation}
which we call the zero-deviation forcing bias.  Thus, $b_t(e)=0$ exactly when zero deviation is a fixed point of $\Tmap_t(\cdot;e)$, equivalently when the chosen anchor is a one-step fixed point at recurrent step $t$.  The forcing bias $b_t(e)$ exposes a source-driven degree of freedom in additive recurrence.  The response can be attenuated, canceled, used for computation, or preserved across recurrent depth, while its effect on the task can be harmful, neutral, or beneficial depending on the readout and loss.  The appendix formalizes these regimes through an exact bias-subtraction counterfactual.

To resolve this design choice in favor of anchor invariance while retaining useful off-anchor motion, we propose Source-Centered State Evolution (\SCSE{}).  \SCSE{} builds a learned anchor once, evolves the deviation through a zero-preserving recurrent core, and uses a mask to make the zero-deviation threshold boundary exact.  The resulting map satisfies $\Tmap_t(0;e)=0$, making the designated anchor a one-step fixed point by construction, while nonzero deviations drive state-dependent recurrent computation.

More generally, this anchor-response diagnostic is not specific to looped Transformers.  Any repeatedly applied transition that reintroduces an input-conditioned source can have a nonzero response at a chosen reference.  Yet this design choice matters especially in looped Transformers because the same transition must handle both the anchor state and ordinary off-anchor states while repeatedly operating on its own outputs across trained and extrapolated depths.  Accordingly, we target the following design objective:

\consistencybox{\emph{A single shared transition should leave the chosen anchor unchanged, implement useful state-dependent motion away from that anchor, and maintain both properties along its self-generated trajectories within and beyond the training loop-depth range.}}

\noindent By contrast, conventional Transformers with unshared layers do not impose the same consistency condition on any one transition, because different layers can specialize to different depth indices and absorb additive source injection as part of a fixed-depth computation.  Under shared recurrence, however, the same source-conditioned transition repeatedly propagates the forcing response, so additive source injection becomes a recurrent forcing term rather than just a layer-local design choice.  This distinction motivates an explicit source-centered reparameterization.

Indeed, recent studies raise related concerns about unreliable dynamics under depth extrapolation \citep{prairie2026parcae,yang2026stars,park2026loopus,sharma2026readout}.  To our knowledge, however, they do not define, measure, or architecturally control the shared transition's response at an input-conditioned reference.  This paper isolates that quantity as the zero-deviation forcing bias $b_t(e)$.

Empirically, in the main WikiText-103 comparisons, \SCSE{} attains the lowest shared-block perplexity (PPL) at every evaluated loop depth, inside and beyond the training loop-depth range, among the looped Transformer baseline and the tuned, capacity-matched, and recurrent-step-conditioned controls.  Figure~\ref{fig:wt103-depth-response} and Table~\ref{tab:stepcond_wt103} report this comparison.  Matched controls and controlled ablation studies identify the learned anchor and the anchor-coordinate deviation recurrence as the main contributors to this advantage, as shown in Table~\ref{tab:add-anchor-design} and Appendix Table~\ref{tab:add-causal-forcing}.  To examine whether the anchor response targeted by the design is observed in trained models, Table~\ref{tab:bias_decomp} verifies the expected pointwise architectural contrast across trained WikiText-103 model families.  Additive-injection models retain nonzero anchor responses, whereas \SCSE{} realizes exact anchor invariance while retaining nonzero anchor-relative motion.  Complementing this replicated measurement, Figure~\ref{fig:zdfb-empirical} provides an anchor-started WikiText-2 case study in which forcing-aligned trajectories and dose-aligned displacements show that the source channel can steer recurrent motion.  Together, results from the performance comparisons, ablations, and diagnostics support source-centered recurrence as a useful design principle.

We make three contributions.  First, we define and measure the anchor-dependent zero-deviation forcing bias and theoretically show that the finite-horizon effect of this design degree of freedom on task loss can be harmful, neutral, or beneficial, depending on recurrent propagation and readout--loss alignment.  Second, we propose \SCSE{}, which evolves deviations around an input-conditioned anchor through a zero-preserving recurrent core, making the anchor a one-step fixed point while retaining useful off-anchor recurrent computation.  Third, across diverse evaluation settings, we observe that \SCSE{} outperforms strong shared-block controls, with its clearest gains at deep extra-loop depths.

\section{Source-Centered State Evolution}
This paper studies the full-state, shared-block looped Transformer setting where the same block is applied repeatedly to the residual stream.  Let $x$ be a token sequence, $P_\phi$ a token-and-position embedding map, $R_\theta$ a shared recurrent core, and $C_\psi$ the output head.  A full-state looped Transformer computes $e=P_\phi(x)$, $h_0=H_0(e)$, $h_{t+1}=R_\theta(h_t,e)$ for $t=0,\ldots,T-1$, and next-token logits using root-mean-square layer normalization (RMSNorm) at readout, $C_\psi(\operatorname{RMSNorm}(h_T))$.  For a chosen loop budget, each recurrent step updates the full residual stream rather than using token-wise routing or a learned halting policy.  Here, the recurrently reused source is distinct from the fixed input-conditioned anchor.  Under additive injection, the source is reintroduced whenever the shared block is applied.  Thus, even at that anchor, the transition can produce the zero-deviation forcing bias $b_t(e)$ as a nonzero update whose finite-horizon task effect may be harmful, neutral, or beneficial.  Our goal is therefore to remove repeated source injection from the recurrent update while retaining input dependence through the fixed anchor and the initial deviation.

The proposed \SCSE{} parameterization selects anchor invariance as a direct architectural condition.  The anchor supplies a fixed origin for recurrent motion, the bias-free recurrent core acts on deviations from that origin, and the zero-deviation mask supplies an exact pointwise boundary condition.  Applying the recurrent core to $\Delta_t$ rather than $h_t$ defines a source-centered recurrent vector field in which the anchor remains separate from the evolving state.

Figure~\ref{fig:zdfb-intuition} illustrates the high-level design choice.  Here, $G_\theta$ is the shared Transformer update implemented with causal self-attention.  Additive source injection leaves a source-driven anchor response available to recurrent propagation, whereas the source-centered update selects anchor invariance and allocates computation to active nonzero deviations.

\begin{figure*}[t!]
	\centering
	\begin{overpic}[width=0.96\textwidth]{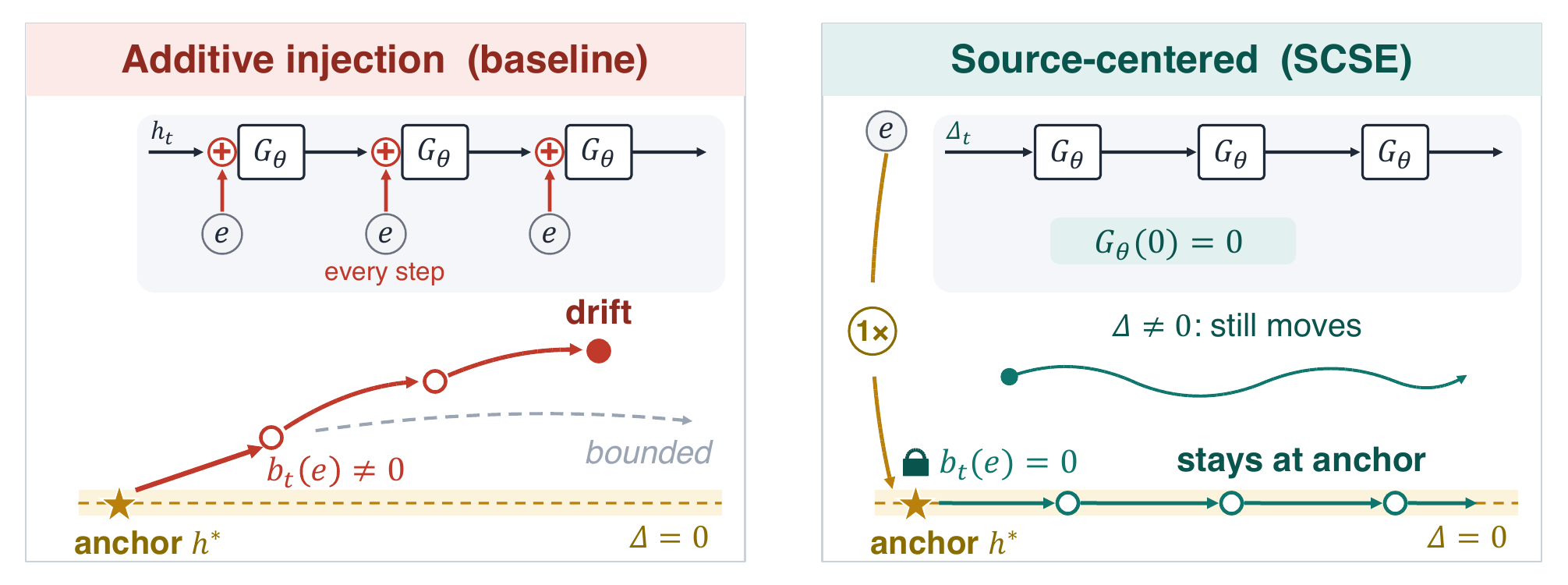}
	\end{overpic}
	\caption{Illustration of the zero-deviation forcing bias and its removal in \SCSE{}.  Repeated additive injection exposes a source-driven degree of freedom $b_t(e)$ at the anchor.  \SCSE{} uses $e$ once to set the anchor and a bias-free recurrent core for which $G_\theta(0)=0$ by construction.  Its zero-deviation mask additionally ensures that the input-conditioned anchor ($\Delta=0$) is a fixed point of the recurrent update at every step, while allowing active nonzero deviations to evolve.}
	\label{fig:zdfb-intuition}
\end{figure*}

\subsection{Source Control}
We compute an input-conditioned anchor
\begin{equation}
	h^\star=e+a_\omega(e),
\end{equation}
and evolve deviations $\Delta_t=h_t-h^\star$.  The anchor $h^\star$ is produced once per input sequence by applying the learned anchor projection module $a_\omega$ to the token-and-position representation $e$ and is then held fixed for all recurrent steps.  Thus, the loop evolves the anchor-relative displacement of the current hidden state from the input-conditioned reference instead of re-adding that reference at every step.  The learned anchor defines the origin for the zero-deviation condition.

Input dependence enters through both the anchor $h^\star(e)$ and the separately learned initial state $h_0=H_0(e)$, hence also through the initial deviation $\Delta_0=H_0(e)-h^\star(e)$.  Starting from $\Delta_0$, \SCSE{} evolves the deviation for $T$ recurrent steps to obtain $\Delta_T$, reconstructs $h_T=h^\star+\Delta_T$, and then applies the final readout.  Throughout this section, deviation refers to the anchor-relative displacement $\Delta_t=h_t-h^\star$.

\paragraph{Why Use Anchor Coordinates.} With repeated additive source injection, the shared block can produce a nonzero update even when the state is exactly at the anchor ($\Delta_t=0$).  As the block is applied repeatedly, this source-driven update may be attenuated, used productively, or accumulated.  \SCSE{} instead stores the input-conditioned anchor once and applies a zero-preserving recurrent update to the current deviation.  Thus, zero deviation remains fixed, while nonzero deviations can still evolve and support useful computation.  Specifically, this separation retains input dependence in $h^\star(e)$ and $\Delta_0$, while the recurrent core receives only $\Delta_t$, so additional loops evolve the current displacement without reapplying a fixed source.

Figure~\ref{fig:scse-intuition} summarizes the source-centered computation represented by the update equations.  Let $q_t^{(b)}$ and $\Delta_t^{(b)}$ denote the sequence-by-channel matrices for batch element $b$ at loop step $t$.  The implementation first forms a raw source-centered update $q_t$ and then applies a per-example zero-deviation mask $m_{b,t}$ to obtain $\bar q_t$:

\begin{figure}[t!]
	\centering
	\includegraphics[width=\columnwidth]{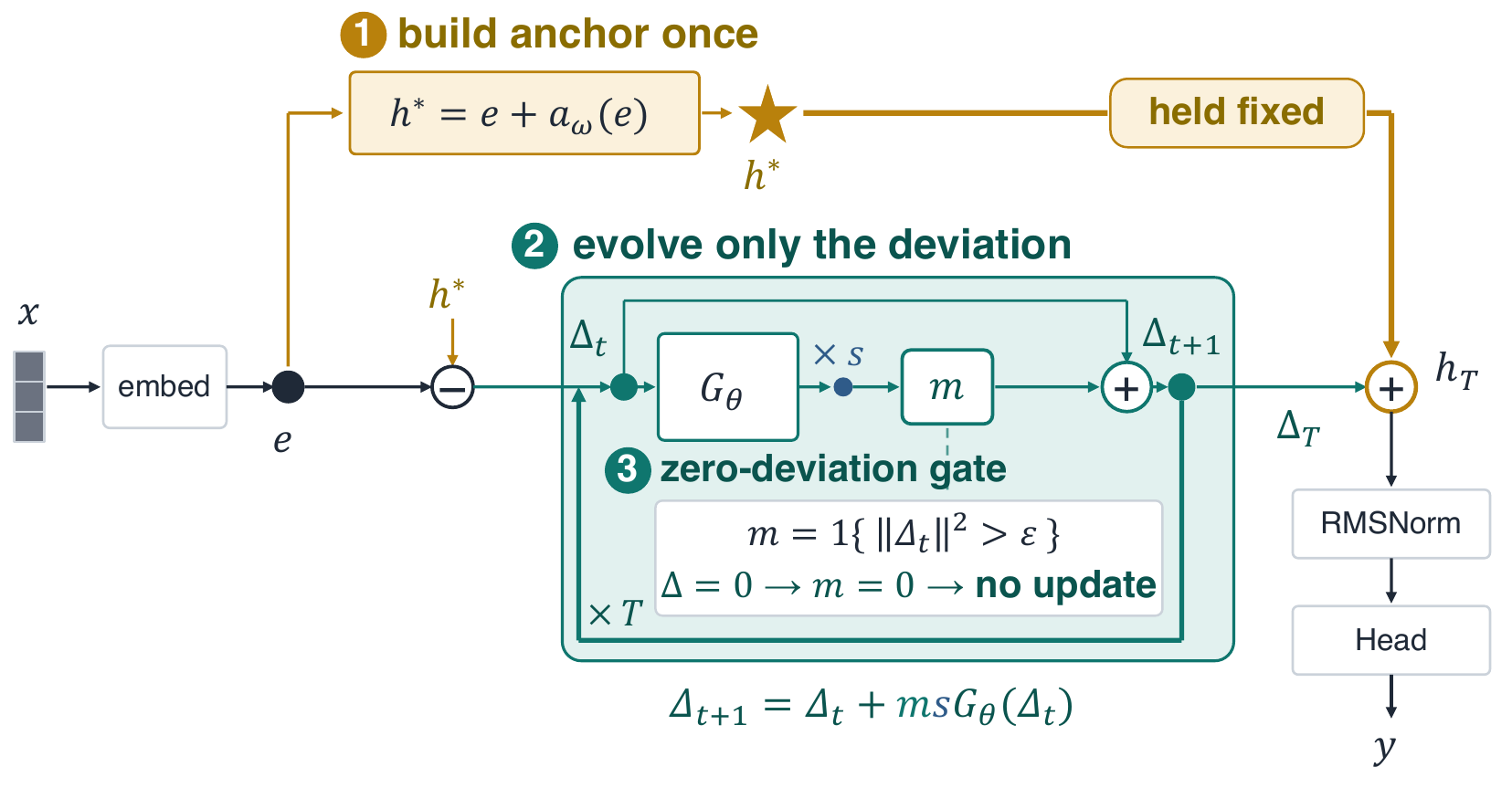}
	\caption{Illustration of \SCSE{}.  The proposed method builds the input-conditioned anchor $h^\star=e+a_\omega(e)$ once and holds the anchor fixed across all recurrent steps, evolves only the anchor-relative deviation $\Delta_t$, and reconstructs the final hidden state as $h^\star+\Delta_T$ before readout.  The zero-preserving recurrent block $G_\theta$ acts on the deviation with the residual step scale $s$, while the gate $m=\mathbf 1\{\norm{\Delta_t}_F^2>\epsilon\}$ enforces the exact pointwise zero-deviation boundary condition.}
	\label{fig:scse-intuition}
\end{figure}

\begin{align}
	q_t                & = sG_\theta(\Delta_t),           & D_{b,t}        & = \norm{\Delta_t^{(b)}}_F^2, \label{eq:source-start} \\
	m_{b,t}            & = \mathbf 1\{D_{b,t}>\epsilon\}, & \bar q_t^{(b)} & = m_{b,t}q_t^{(b)},                                  \\
	\Delta_{t+1}^{(b)} & =\Delta_t^{(b)}+\bar q_t^{(b)}.
	\label{eq:source}
\end{align}

When $\Delta_t^{(b)}=0$, we have $D_{b,t}=0$, which sets $m_{b,t}=0$ and $\bar q_t^{(b)}=0$.  Equation~\ref{eq:source} then gives $\Delta_{t+1}^{(b)}=\Delta_t^{(b)}=0$, equivalently $h_{t+1}^{(b)}=h_t^{(b)}=(h^\star)^{(b)}$, thereby yielding $\Tmap_t(0;e)=0$.  Thus, the designated anchor is a one-step fixed point by construction.

Here $s$ is the residual step scale.  In all reported experiments, the mask threshold is $\epsilon=10^{-8}$.  The per-example mask is active whenever $\norm{\Delta_t^{(b)}}_F^2>\epsilon$ and otherwise sets the recurrent increment to exactly zero throughout the small threshold region.  Let $\widetilde{\Tmap}_t$ denote the raw deviation map before masking and $\widetilde b_t(e)=\widetilde{\Tmap}_t(0;e)$ its raw anchor response.  For the \SCSE{} recurrence used in the main experiments, $\widetilde{\Tmap}_t(\Delta;e)=\Delta+sG_\theta(\Delta)$.  Because RMSNorm maps zero to zero and every attention and multilayer-perceptron projection in the core is bias-free, $G_\theta(0)=0$; hence $\widetilde b_t(e)=0$ even before masking.  Therefore, the source-centered, zero-preserving core is the primary reparameterization.  The mask supplies the exact pointwise boundary condition even if the underlying core is not zero-preserving.

Eqs.~\ref{eq:source-start} through~\ref{eq:source} specify one recurrent step.  In the end-to-end forward pass, the model computes $e=P_\phi(x)$, $h^\star=e+a_\omega(e)$, and $\Delta_0=H_0(e)-h^\star$ once, unrolls the shared update for $T$ steps with $h^\star$ held fixed, reconstructs $h_T=h^\star+\Delta_T$, and predicts next-token logits from $C_\psi(\operatorname{RMSNorm}(h_T))$.  Because $H_0$ and $h^\star$ are learned separately, the ordinary \SCSE{} trajectory generally starts with $\Delta_0\neq0$ and can compute immediately.  Starting exactly at $\Delta_0=0$ is instead a boundary test for the diagnostic below.  Choosing $h^\star=H_0(e)$ collapses the initial deviation and, as the anchor ablation in Table~\ref{tab:add-anchor-design} confirms, suppresses useful recurrent motion.

\subsection{Anchor-Consistency Condition and Diagnostic}
The preceding construction specifies how \SCSE{} computes and establishes the designated anchor as a one-step fixed point.  We now turn that condition into a diagnostic that can be evaluated on any trained shared-block model.  The Mechanistic Analysis section applies this diagnostic across trained model families, as reported in Table~\ref{tab:bias_decomp}.  Because $h^\star$ is fixed during the unroll, $\Delta_t=0$ places the hidden state exactly at the anchor.  Here, $\Tmap_t(\Delta;e)$ is the actual one-step deviation map before readout, with $h_t=h^\star+\Delta_t$ and $h_{t+1}=h^\star+\Tmap_t(\Delta_t;e)$.  The diagnostic $b_t(e)=\Tmap_t(0;e)$ measures the source-driven response at that anchor.  In \SCSE{}, this response vanishes by construction, as established after Eq.~\ref{eq:source}.  The quantity $b_t(e)$ tests only the realized one-step map at the anchor.  The diagnostic should not be conflated with the core's response before masking, $\widetilde b_t(e)$, and the pointwise value alone does not show how source forcing affects an ordinary trajectory over multiple steps.

On the active branch where $m_{b,t}=1$, \SCSE{} still differs from the looped Transformer baseline.  The baseline applies the shared transition to $h_t$, whereas \SCSE{} applies the shared transition to the anchor-relative deviation $\Delta_t$ and reconstructs states as $h^\star+\Delta_t$.  This reparameterization combines exact anchor invariance with state-dependent recurrent motion.

Additive-source recurrence can retain nonzero $b_t(e)=\Tmap_t(0;e)$ on the anchor-response test, whereas \SCSE{} sets both the pointwise term and, for its reported zero-preserving core, the raw anchor response to zero.  The appendix formalizes when the anchor response remains bounded under contraction, accumulates coherently, or increases task loss under adverse readout--loss alignment.

\section{Experiments}
\subsection{Setup}
The appendix provides detailed experimental specifications, including dataset splits, model dimensions, optimizer settings, benchmark protocols, floating-point operation (FLOP) accounting, matched token budgets, aggregation, and stochastic-replication counts.

The experiments evaluate \SCSE{} against a broad suite of shared-block controls and unshared references.  We complement these comparisons with anchor-design ablations, trained-model diagnostics of anchor response and source-channel steering, and direct intervention controls.  Additional evaluations in the appendix test the method across model scales, training budgets, adaptive loop budgets, context lengths, corpus shifts, completion benchmarks, and runtime settings.

During training, we sample the loop depth uniformly from $\{1,\ldots,8\}$.  At evaluation, we apply the same shared block at fixed depths $T\in\{4,8,12,16,24,32,48\}$, where available; separate adaptive-depth evaluations use an adaptive stopping rule over loop iterations.  We call $T\leq8$ the training loop-depth range and $T>8$ extra-loop evaluation.  This training protocol exposes all shared-block methods to multiple depths during optimization.

All main \SCSE{} experiments use the recurrence in Eq.~\ref{eq:source} with the residual step scale fixed at $s=0.50$.  The looped Transformer baseline follows the same loop-depth sampling, optimizer, sequence length, and evaluation schedule.  The main comparison suite includes tuned adapters, recurrent-step-conditioned adapters, parameter-matched unshared controls, and absolute unshared-depth references.  The tuned adapter uses the same learned anchor and initialization modules as \SCSE{} but keeps the additive-source update $h_{t+1}=h_t+s\Block_\theta(h_t+\alpha W_{\rm in}h^\star)$, with no recurrent-step term and no source-centered zero-deviation mask.  Here, $\alpha$ is that model's learned scalar source gain.  The reported tuned rows use the selected residual step scale shown in the row label, for example $s=0.35$ in the main comparisons.

\subsection{Shared-Block Control Comparisons}
\paragraph{Strong Shared-Block Transformer Baselines.} Appendix Table~\ref{tab:strong_followup} reports the stronger shared-block comparison on WikiText-103 at the 22M scale.  The capacity-matched and tuned adapters are substantially stronger than the looped Transformer baseline, yet \SCSE{} attains lower mean PPL than all three baselines at every loop depth reported in Appendix Table~\ref{tab:strong_followup}.  These comparisons support the benefit of the full source-centered reparameterization.  Appendix Table~\ref{tab:add-causal-forcing} separately probes the zero-deviation mask and subtraction of the anchor response throughout the active update.  Removing the mask from the source-conditioned anchor-coordinate (SC-Cond) reference leaves PPL nearly unchanged, whereas the separately trained two-body subtraction controls improve deep extra-loop PPL.

\paragraph{Recurrent-Step-Conditioned Controls.} To test whether explicit loop-step information can account for \SCSE{}'s gains, we construct a recurrent-step-conditioned control, drawing on a mechanism used by the Universal Transformer and many modern looped-depth designs \citep{dehghani2019universal,fan2025looped,geiping2025huginn,zhu2025ouro,jeddi2026loopformer,chen2026thinking}.  Public looped-depth systems pair recurrent-step conditioning with other design choices involving scale, data, loop placement, training objective, adaptive computation, stable injection, and learned depth allocation.  Concretely, the recurrent-step-conditioned control adds a learned projection of a sinusoidal loop-step embedding to the shared block's input at each recurrence.  All other training settings are unchanged.  The added parameter count is small relative to the model scale.
\begin{equation}
	h_{t+1}
	=
	h_t + s\Block_\theta(h_t+\alpha W_{\rm in}h^\star+\tau P_{\rm step}\gamma_t),
\end{equation}
where $\gamma_t$ is the sinusoidal loop-step embedding used at loop step $t$, $P_{\rm step}$ is a learned recurrent-step projection, distinct from the embedding map $P_\phi$, and $\tau$ is tuned on WikiText-103 22M.  The best setting in the recurrent-step-conditioned control is small, with $\tau=0.015$.

Table~\ref{tab:stepcond_wt103} shows the resulting comparison.  At 22M, the recurrent-step-conditioned adapter is essentially tied with the tuned adapter at $T=8$, with $156.7\pm0.5$ PPL compared with $156.4\pm0.6$, and trails by $1.2$ PPL at $T=24$ and $2.1$ PPL at $T=48$.  \SCSE{} keeps the best shared-block quality frontier at $T=8$, $T=24$, and $T=48$.

At 50M, the recurrent-step-conditioned adapter becomes a genuinely strong recurrent baseline.  The adapter improves the looped Transformer baseline from $151.1$ to $125.7$ PPL at $T=8$ and from $178.9$ to $160.1$ at $T=48$.  At this scale, \SCSE{} attains the lowest PPL among the compared shared-block methods at every measured loop depth, reaching $123.1$ at $T=8$, $135.5$ at $T=24$, and $156.4$ at $T=48$.

At 95.6M, the same control remains competitive and informative.  The control reaches $100.8\pm0.5$ PPL at $T=8$, $111.9\pm0.3$ at $T=24$, and $130.6\pm1.0$ at $T=48$, while \SCSE{} gives the best PPL at every measured depth.

Appendix Tables~\ref{tab:heldout_transfer} and~\ref{tab:lambada} further evaluate the same recurrent-step-conditioned control on held-out web-text transfer and the Language Modeling Broadened to Account for Discourse Aspects (LAMBADA) completion benchmark.  The recurrent-step-conditioned adapter becomes competitive at larger scales and even achieves the lowest $T=8$ PPL in the 50M held-out OpenWebText setting, but the adapter remains behind \SCSE{} in the reported deep extra-loop and completion comparisons.  Thus, the recurrent-step-conditioned adapter is a strong baseline, but recurrent-step conditioning alone does not account for the source-centered quality advantage.

\begin{table}[t!]
	\centering
	\scriptsize
	\resizebox{\columnwidth}{!}{
		\begin{tabular}{lrrr}
			\toprule
			                                               & \multicolumn{3}{c}{PPL}                                                   \\
			\cmidrule(lr){2-4}
			Method                                         & $T=8$                   & $T=24$                 & $T=48$                 \\
			\midrule
			\multicolumn{4}{l}{22M}                                                                                                    \\
			Tuned adapter, $s=0.35^{\dagger}$              & $156.4\pm0.6$           & $174.4\pm0.6$          & $204.1\pm1.4$          \\
			Recurrent-step-conditioned adapter$^{\dagger}$ & $156.7\pm0.5$           & $175.6\pm0.7$          & $206.1\pm1.4$          \\
			\SCSE{}, $s=0.50$                              & $\mathbf{155.1\pm0.6}$  & $\mathbf{171.1\pm0.8}$ & $\mathbf{200.1\pm1.3}$ \\
			\addlinespace[2pt]
			\midrule
			\multicolumn{4}{l}{50M}                                                                                                    \\
			Looped Transformer baseline                    & $151.1\pm0.9$           & $162.5\pm1.0$          & $178.9\pm1.3$          \\
			Recurrent-step-conditioned adapter             & $125.7\pm0.4$           & $139.2\pm0.4$          & $160.1\pm0.2$          \\
			\SCSE{}                                        & $\mathbf{123.1\pm0.2}$  & $\mathbf{135.5\pm0.4}$ & $\mathbf{156.4\pm1.1}$ \\
			\addlinespace[2pt]
			\midrule
			\multicolumn{4}{l}{95.6M}                                                                                                  \\
			Looped Transformer baseline                    & $117.1\pm0.5$           & $128.0\pm0.7$          & $145.1\pm1.2$          \\
			Tuned adapter, $s=0.35$                        & $98.6\pm0.5$            & $109.8\pm0.9$          & $128.7\pm0.9$          \\
			Recurrent-step-conditioned adapter             & $100.8\pm0.5$           & $111.9\pm0.3$          & $130.6\pm1.0$          \\
			\SCSE{}                                        & $\mathbf{96.9\pm0.3}$   & $\mathbf{107.6\pm0.6}$ & $\mathbf{125.9\pm0.9}$ \\
			\bottomrule
		\end{tabular}
	}
	\caption{Comparison of \SCSE{} with shared-block controls on WikiText-103 under matched training protocols.  The recurrent-step-conditioned control adds a small projected sinusoidal loop-step embedding to the tuned-adapter protocol.  Rows marked $^\dagger$ are explicitly tuned 22M hyperparameter variants.  \SCSE{} remains the best shared-block quality frontier across the reported loop depths.}
	\label{tab:stepcond_wt103}
\end{table}

\subsection{Depth Response Across Loop Depths}

Figure~\ref{fig:wt103-depth-response} shows the 95.6M fixed-depth response across the measured loop-depth range.  The comparison evaluates one set of trained weights across evaluation depths rather than training a separate model per depth.  Within the training loop-depth range, the expected benefit of additional loops in looped Transformers \citep{zhu2025ouro,prairie2026parcae} holds: PPL improves from $T=4$ to $T=8$ for every method with both depths reported, and each method's minimum over the measured depths occurs at $T=8$.  Beyond $T=8$, WikiText-103 test PPL increases at each subsequently reported depth for every method, indicating worse in-domain performance.  \SCSE{} retains the lowest PPL among the compared shared-block methods at each depth.  From $T=8$ to $T=48$, PPL increases by $28.0$ for the looped Transformer baseline, $30.1$ for the tuned adapter, $29.8$ for the recurrent-step-conditioned adapter, and $29.0$ for \SCSE{}.  On held-out OpenWebText, extra loops beyond the training loop-depth range instead improve \SCSE{} at every scale while the additive-source controls degrade, as shown in Appendix Table~\ref{tab:heldout_transfer}.

\begin{figure}[t!]
	\centering
	\includegraphics[width=\columnwidth]{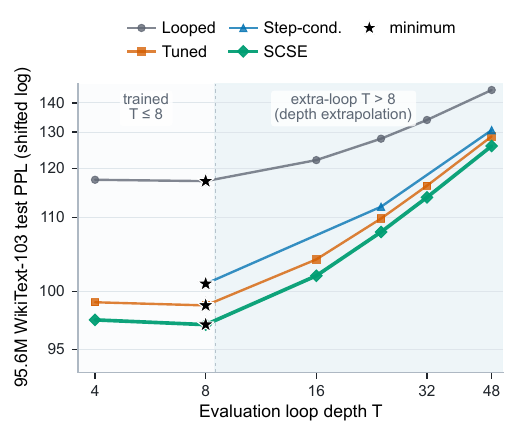}
	\caption{Mean test PPL versus evaluation loop depth for 95.6M WikiText-103 models.  For each trained run, the same weights are evaluated at all loop depths.  For every method reported at both depths, PPL improves from $T=4$ to $T=8$ within the training loop-depth range; stars mark the per-method minima, and the shaded region $T>8$ denotes depth extrapolation.  Beyond $T=8$, test PPL increases at each subsequently reported depth for every method, while \SCSE{} retains the lowest PPL among the compared shared-block methods at each depth.  Loop depth is plotted on a base-2 logarithmic axis; the PPL axis uses the shifted-log transform $\log(\mathrm{PPL}-84.85)$, with tick labels shown in the original PPL units.}
	\label{fig:wt103-depth-response}
\end{figure}

\subsection{Anchor Design Ablation}
\label{sec:anchor-design-ablation}

We vary only the anchor definition within \SCSE{}; all variants use the recurrence in Eq.~\ref{eq:source} and the same training protocol.  The learned-anchor row, $h^\star=e+a_\omega(e)$, corresponds to the main \SCSE{} model.  The ablation compares the learned anchor with the raw embedding anchor $h^\star=e$, an initial-state anchor $h^\star=H_0(e)$, and a frozen random projection with the same module shape as $a_\omega$.

\begin{table}[t!]
	\centering
	\scriptsize
	\resizebox{\columnwidth}{!}{
		\begin{tabular}{lrrr}
			\toprule
			                                & \multicolumn{3}{c}{Test PPL}                                                       \\
			\cmidrule(lr){2-4}
			Anchor                          & $T=8$                        & $T=24$                   & $T=48$                   \\
			\midrule
			Learned $h^\star=e+a_\omega(e)$ & $\mathbf{155.14\pm0.61}$     & $\mathbf{171.12\pm0.76}$ & $\mathbf{200.10\pm1.31}$ \\
			Embedding $e$                   & $160.55\pm1.44$              & $177.32\pm0.99$          & $206.16\pm0.99$          \\
			Initial $H_0(e)$                & $294.37\pm1.24$              & $294.37\pm1.24$          & $294.37\pm1.24$          \\
			Frozen random projection        & $159.14\pm0.78$              & $175.74\pm0.89$          & $203.78\pm0.94$          \\
			\bottomrule
		\end{tabular}
	}
	\caption{Anchor design ablation on WikiText-103.  The learned anchor has the lowest PPL among the tested source-centered anchors.  Setting the anchor equal to the initial state collapses the initial deviation to zero and prevents useful recurrent motion under the zero-deviation constraint, yielding the flat depth curve.}
	\label{tab:add-anchor-design}
\end{table}

Table~\ref{tab:add-anchor-design} shows that the learned anchor contributes beyond parameter capacity.  The raw embedding anchor and frozen random anchor both have higher PPL than the learned anchor at all reported depths.  Capacity-matched, tuned, and recurrent-step-conditioned adapters instantiate the same learned anchor and initialization modules but use additive recurrent updates, isolating the effect of source-centered recurrence.  Setting the anchor to $h^\star=H_0(e)$ collapses the initial deviation to zero: $\Delta_0=H_0(e)-h^\star=0$.  The zero-deviation constraint then keeps $\Delta_t=0$ at every loop step, so the recurrent branch makes no update and PPL is independent of loop depth.  The result favors a fixed anchor that remains separate from the initial recurrent state, while leaving the design of more effective learned anchor modules open.

\section{Mechanistic Analysis}
\label{sec:analysis}

Figure~\ref{fig:wt103-depth-response} shows the task-level depth-response pattern; however, task-level PPL alone does not identify the underlying mechanism.  We therefore address two specific mechanism-level questions: (i) whether trained maps exhibit the predicted pointwise anchor response, and (ii) whether source-channel interventions can steer recurrent motion in an anchor-started rollout.

\begin{figure*}[t!]
	\centering
	\begin{overpic}[width=0.99\textwidth]{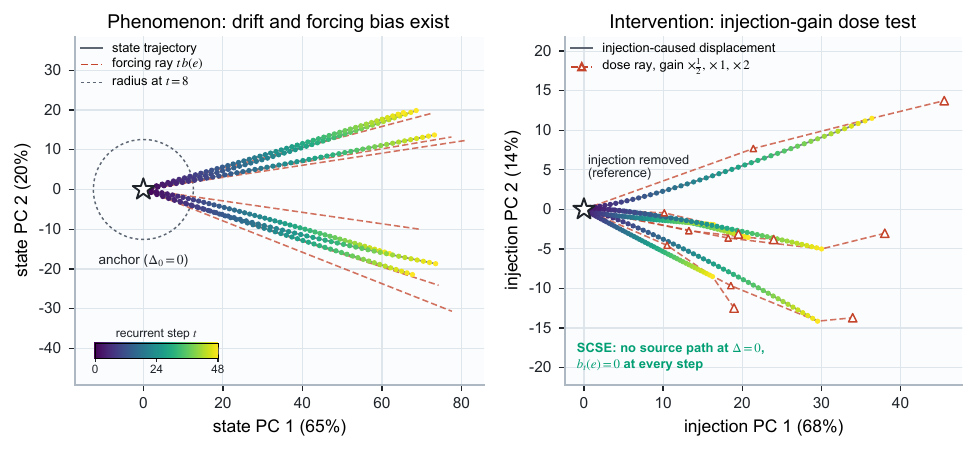}
		\put(1,44){\small (a)}
		\put(51,44){\small (b)}
	\end{overpic}
	\caption{Anchor-start diagnostic of forcing-aligned motion using trained weights from one WikiText-2 model; all rollouts set $\Delta_0=0$.  (a) In the additive-injection model, a nonzero anchor response initiates motion away from the anchor, and the ensuing trajectories remain aligned with the input-specific forcing rays $t b(e)$, visualizing forcing-aligned anchor drift.  The dotted circle marks the mean anchor distance at the largest trained depth ($t=8$) and is not a stability boundary.  (b) Varying only the evaluation-time injection gain yields collinear injection-relative displacements whose magnitudes scale with the gain, providing dose-response evidence that the source channel steers recurrent motion.  Triangles mark the half-gain and double-gain endpoints.  The panels use separate principal-component projections, and the green label marks the architectural contrast $b_t(e)=0$.}
	\label{fig:zdfb-empirical}
\end{figure*}

\begin{table}[t!]
	\centering
	\scriptsize
	\resizebox{\columnwidth}{!}{
		\begin{tabular}{lrrr}
			\toprule
			Method                             & $R_0$ & $R_{47}$ & $E_{\Delta,47}$ \\
			\midrule
			\multicolumn{4}{l}{22M}                                                 \\
			Looped Transformer baseline        & 1.000 & 4.351    & 44.7            \\
			Tuned adapter, $s=0.35$            & 0.380 & 1.619    & 9.8             \\
			Recurrent-step-conditioned adapter & 0.329 & 1.276    & 9.7             \\
			\SCSE{}                            & 0.000 & 0.000    & 18.1            \\
			\midrule
			\multicolumn{4}{l}{50M}                                                 \\
			Looped Transformer baseline        & 1.000 & 4.296    & 477.2           \\
			Recurrent-step-conditioned adapter & 0.264 & 1.053    & 64.7            \\
			\SCSE{}                            & 0.000 & 0.000    & 109.9           \\
			\midrule
			\multicolumn{4}{l}{95.6M}                                               \\
			Looped Transformer baseline        & 1.000 & 5.436    & 2129.4          \\
			Tuned adapter, $s=0.35$            & 0.239 & 1.331    & 321.5           \\
			Recurrent-step-conditioned adapter & 0.151 & 0.894    & 332.1           \\
			\SCSE{}                            & 0.000 & 0.000    & 603.8           \\
			\bottomrule
		\end{tabular}
	}
	\caption{Anchor-response forcing-bias diagnostic using trained WikiText-103 weights and recurrent unrolling through $T=48$.  $R_t$ is the pointwise bias energy normalized by realized update energy, and $E_{\Delta,t}:=N^{-1}\norm{\Delta_t}_F^2$ is the mean-squared anchor energy.  Within each model family's anchor coordinates, \SCSE{} has zero pointwise response at the reported steps, whereas the additive-source models retain nonzero late-step responses.  The nonzero $E_{\Delta,47}$ values serve only as a non-collapse check, showing that ordinary \SCSE{} trajectories remain far outside the zero-deviation threshold region; $E_{\Delta}$ magnitudes depend on each family's anchor and are not compared across families.}
	\label{tab:bias_decomp}
\end{table}

\paragraph{Pointwise Anchor-Response Diagnostic.} To test the predicted pointwise anchor-response contrast, we measure one-step responses at zero deviation.  Each model family uses its own fixed input-conditioned anchor: $h^\star=e+a_\omega(e)$ for anchored adapters and source-centered variants, and $h^\star=e$ for the additive-injection baseline.  Accordingly, we apply the pointwise anchor-response diagnostic separately within each trained model family.

Recall that $\Tmap_t(\Delta;e)$ denotes the one-step deviation map in anchor coordinates before readout, with $\Delta_{t+1}=\Tmap_t(\Delta_t;e)$.  We decompose
\begin{equation}
	\Tmap_t(\Delta;e)=\Delta+b_t(e)+a_t(\Delta;e),
	\label{eq:forcing-decomposition}
\end{equation}
where
\begin{equation}
	\begin{aligned}
		b_t(e)        & :=\Tmap_t(0;e),                          \\
		a_t(\Delta;e) & :=\Tmap_t(\Delta;e)-\Tmap_t(0;e)-\Delta.
	\end{aligned}
\end{equation}
The term $b_t(e)$ is the pointwise zero-deviation forcing bias, abbreviated as the forcing bias in experimental labels.

Let $N$ be the number of scalar entries in a measured batch tensor.  The implemented forcing-bias energy ratio is
\begin{equation}
	R_t(e)
	:=
	\frac{N^{-1}\norm{b_t(e)}_F^2}
	{\max\{N^{-1}\norm{\Delta_{t+1}-\Delta_t}_F^2,10^{-12}\}}.
	\label{eq:forcing-bias-ratio}
\end{equation}
Thus, $R_t(e)$ measures pointwise anchor-response energy relative to realized recurrent-update energy: zero indicates no measured pointwise response and, when the numerical floor is inactive, one indicates equal energies.  Values above one can reflect cancellation between pointwise and state-dependent terms and are not causal contribution percentages.

Table~\ref{tab:bias_decomp} verifies the architectural contrast across scales.  These results answer the first question: additive-source models retain measurable pointwise responses across scales ($R_{47}=0.894$--$5.436$), whereas \SCSE{} removes both the pointwise and raw pre-mask anchor responses while preserving nonzero anchor-relative motion.
\paragraph{Anchor-Start Source-Steering Diagnostic.} Figure~\ref{fig:zdfb-empirical} complements these aggregate measurements with a geometric case study using trained weights from one WikiText-2 model.  The study uses the 22M shared-block 1200-step protocol with one stochastic replication and six held-out test inputs.  The diagnostic rollouts deliberately start from the anchor rather than the model's ordinary initialization.  Additive-injection trajectories follow input-specific forcing rays with trajectory--ray cosines between $0.78$ and $1.00$.  Rescaling only the evaluation-time injection gain produces dose-aligned displacement paths relative to the injection-removed rollout.  These paths grow monotonically and span $25\%$ to $51\%$ of the full sequence-mean anchor push at $t=48$.  Their root-mean-square magnitude is about $36\%$ of the anchor push.  Their directions are nearly orthogonal to the realized drift.

More specifically, Figure~\ref{fig:zdfb-empirical}(a) evaluates the full learned block response at the anchor, whereas Figure~\ref{fig:zdfb-empirical}(b) provides intervention-based evidence of source-channel steering.  Within this anchor-start diagnostic, the injection-gain intervention answers the second question affirmatively: changing only the evaluation-time injection gain steers recurrent motion in a dose-aligned manner.  The appendix separately provides the exact $b_t$-subtraction analysis and direct forcing-bias controls relevant to broader drift and PPL attribution.

\section{Conclusion}
Repeated source injection leaves a source-driven anchor response as a degree of freedom in shared recurrent dynamics.  The contribution of the anchor response is shaped by recurrent propagation and readout--loss alignment, as characterized by the exact bias-subtraction counterfactual.  \SCSE{} removes the source-driven anchor-response degree of freedom by imposing the anchor-consistency condition.  A learned anchor and an initial deviation retain input dependence, while a zero-preserving deviation core allocates recurrent computation to nonzero deviations.  The resulting architectural separation is a desirable property of shared recurrence: the anchor remains a fixed point, yet nonzero deviations can still support input-dependent computation.  Across fixed-depth, extra-loop, adaptive-depth, transfer, completion, context-length, runtime, and scale comparisons, \SCSE{} improves the shared-block recurrent quality frontier.  Ablation studies identify the learned anchor and the anchor-coordinate deviation recurrence as the main contributors, and trained-model diagnostics connect the anchor response to observed recurrent motion.  These results support source-centered coordinates as an effective inductive bias for reusable recurrent depth.

\bibliography{aaai2027}

\clearpage
\appendix

\section{Appendix}
\label{sec:appendix}

\subsection{List of Notation}
\begin{table}[t!]
	\centering
	\scriptsize
	\resizebox{\columnwidth}{!}{
		\begin{tabular}{ll}
			\toprule
			Symbol                                  & Meaning                                                                           \\
			\midrule
			$x$                                     & Token sequence                                                                    \\
			$d$                                     & Hidden width                                                                      \\
			$d_{\rm ff},L$                          & Feed-forward width and sequence length                                            \\
			$P_\phi$                                & Token-and-position embedding map                                                  \\
			$e=P_\phi(x)$                           & Input representation                                                              \\
			$H_0$                                   & Initial-state map, $h_0=H_0(e)$                                                   \\
			$h_t$                                   & Hidden state at recurrent step $t$                                                \\
			$T,t$                                   & Loop depth and recurrent-step index                                               \\
			$N_{\rm body}(T),N_{\rm layers}$        & Active body applications and unshared layers                                      \\
			$c_{\rm body}$                          & Body evaluations per logical recurrent step                                       \\
			$R_\theta$                              & Recurrent transition                                                              \\
			$G_\theta,\Block_\theta$                & Shared update maps                                                                \\
			$C_\psi$                                & Output head and readout map                                                       \\
			$h^\star=e+a_\omega(e)$                 & Fixed input-conditioned anchor                                                    \\
			$\Delta_t=h_t-h^\star$                  & Anchor-relative deviation                                                         \\
			$q_t,\bar q_t$                          & Raw and masked source-centered updates                                            \\
			$s,\lambda_{\rm leak},\kappa$           & Step scale, leak, and diagnostic conditioning                                     \\
			$W_c,W_{\rm in}$                        & Conditioning and additive-source projections                                      \\
			$A_t^{\rm sec},\Phi_E(t,k)$             & Exact secant maps and transition products                                         \\
			$\rho$                                  & Real eigenvalue associated with a left eigenvector of the constant secant map $A$ \\
			$\bar\Delta_t,E_t,p_{T,k}$              & Bias-subtracted path and propagated response                                      \\
			$\widehat E_T$                          & Unit direction of the final forcing response                                      \\
			$\Tmap_t(\Delta;e)$                     & Actual anchor-coordinate one-step map                                             \\
			$\widetilde{\Tmap}_t,\widetilde b_t(e)$ & Raw pre-mask map and anchor response                                              \\
			$b_t(e),R_t(e)$                         & Pointwise forcing bias and reported energy ratio                                  \\
			$a_t(\Delta;e)$                         & State-dependent residual term                                                     \\
			$\alpha,\tau_t$                         & Learned additive-source gain and optional step conditioning                       \\
			$|\mathcal V|$                          & Vocabulary size                                                                   \\
			$z,y$                                   & Next-token logits and target                                                      \\
			$\mathcal J_e$                          & Task loss in anchor coordinates                                                   \\
			$\norm{\cdot}$                          & Vector or matrix norm                                                             \\
			\bottomrule
		\end{tabular}
	}
	\caption{List of notation.}
	\label{tab:notation}
\end{table}

\subsection{When Zero-Deviation Forcing Accumulates}
\label{app:forcing-accumulation}
The definition $b_t(e)=\Tmap_t(0;e)$ establishes whether the chosen anchor is a one-step fixed point, but the pointwise definition does not establish whether that pointwise term changes an ordinary off-anchor trajectory.  We therefore analyze an explicit counterfactual that subtracts the same anchor response throughout the learned map.  This intervention is a mathematical comparison, not an additional empirical result.  For a discontinuous masked map, the result can instead be applied to a continuously differentiable raw or active map or to a separately defined smooth subtractive construction.  In that case, throughout this subsection $\Tmap_t$ and $b_t$ denote the selected differentiable map and its own anchor response.  In particular, selecting the raw map means using $\widetilde{\Tmap}_t$ and $\widetilde b_t$.  The result cannot be inferred merely from changing the masked branch within the zero-deviation threshold region.

Fix an input $e$ and an interval $t=t_0,\ldots,T-1$.  Using the exact decomposition in Eq.~\ref{eq:forcing-decomposition}, define the original and bias-subtracted trajectories from the same initial state by
\begin{align}
	\Delta_{t+1}
	 & =\Delta_t+b_t(e)+a_t(\Delta_t;e),                                      \notag \\
	\bar\Delta_{t+1}
	 & =\Tmap_t(\bar\Delta_t;e)-b_t(e)
	=\bar\Delta_t+a_t(\bar\Delta_t;e),                         \notag                \\
	\bar\Delta_{t_0}
	 & =\Delta_{t_0}.
	\label{eq:bias-subtracted-trajectories}
\end{align}
Throughout this subsection, each sequence-by-channel deviation is identified with its vectorization in $\R^{Ld}$.  Accordingly, $\langle\cdot,\cdot\rangle$ and $\norm{\cdot}_2$ denote the Euclidean inner product and norm.  Before vectorization, these are the Frobenius inner product and norm.  The Jacobians below act on vectorized states.  Let $E_t=\Delta_t-\bar\Delta_t$ denote the counterfactual forcing response.  Assume $a_t(\cdot;e)$ is continuously differentiable on a neighborhood of the line segment joining $\bar\Delta_t$ and $\Delta_t$, and define the exact secant propagation map
\begin{equation}
	\begin{aligned}
		A_t^{\rm sec}
		 & :=I+\int_0^1
		J_\Delta a_t(\bar\Delta_t+\xi E_t;e)d\xi \\
		 & =\int_0^1
		J_\Delta\Tmap_t(\bar\Delta_t+\xi E_t;e)d\xi.
	\end{aligned}
	\label{eq:secant-propagation}
\end{equation}
For $t>k$, define $\Phi_E(t,k)=A_{t-1}^{\rm sec}\cdots A_k^{\rm sec}$ and $\Phi_E(k,k)=I$.

\begin{lemma}[Exact one-step response recursion]
	\label{lem:onestep-forcing}
	The nonlinear trajectories in Eq.~\ref{eq:bias-subtracted-trajectories} satisfy
	\begin{equation}
		E_{t+1}
		=A_t^{\rm sec}E_t+b_t(e).
		\label{eq:exact-response-step}
	\end{equation}
\end{lemma}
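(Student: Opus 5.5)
The plan is to subtract the two recursions in Eq.~\ref{eq:bias-subtracted-trajectories} directly and then rewrite the difference of the state-dependent terms with the fundamental theorem of calculus along the segment.

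First, subtracting the bias-subtracted update from the original update gives
\begin{equation*}
	E_{t+1}=\Delta_{t+1}-\bar\Delta_{t+1}
	=E_t+b_t(e)+\bigl[a_t(\Delta_t;e)-a_t(\bar\Delta_t;e)\bigr].
\end{equation*}
Second, define $\gamma(\xi)=\bar\Delta_t+\xi E_t$ for $\xi\in[0,1]$, so that $\gamma(0)=\bar\Delta_t$ and $\gamma(1)=\Delta_t$. By the assumed continuous differentiability of $a_t(\cdot;e)$ on a neighborhood of this segment, the map $\xi\mapsto a_t(\gamma(\xi);e)$ is $C^1$. The chain rule gives its derivative as $J_\Delta a_t(\gamma(\xi);e)E_t$. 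Integrating componentwise from $0$ to $1$ then yields
\begin{equation*}
	a_t(\Delta_t;e)-a_t(\bar\Delta_t;e)=\Bigl(\int_0^1 J_\Delta a_t(\bar\Delta_t+\xi E_t;e)\,d\xi\Bigr)E_t.
\end{equation*}
Pulling $E_t$ out of the integral is legitimate because $E_t$ is independent of $\xi$. The integral is well defined because the Jacobian is continuous on the compact segment.

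Third, combining the two displays gives $E_{t+1}=A_t^{\rm sec}E_t+b_t(e)$, using the first expression for $A_t^{\rm sec}$ in Eq.~\ref{eq:secant-propagation}. To confirm the second expression, note that differentiating the decomposition in Eq.~\ref{eq:forcing-decomposition} in $\Delta$ gives $J_\Delta\Tmap_t=I+J_\Delta a_t$, since $b_t(e)$ does not depend on $\Delta$. Integrating this identity over $\xi\in[0,1]$ recovers the same matrix.

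The argument is entirely elementary. The only point needing care is the regularity hypothesis: for the masked \SCSE{} map it must be applied to the selected smooth raw or active map, as the subsection stipulates. There is no real obstacle beyond making sure the decomposition's $b_t$ is the anchor response of that same selected map.
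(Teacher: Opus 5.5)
Your proposal is correct and follows the paper's proof essentially line for line: you subtract the two recurrences to isolate $a_t(\Delta_t;e)-a_t(\bar\Delta_t;e)$, then apply the vector-valued fundamental theorem of calculus along $\bar\Delta_t+\xi E_t$ to write this difference as $(A_t^{\rm sec}-I)E_t$. Your extra check that the two expressions for $A_t^{\rm sec}$ agree, via $J_\Delta\Tmap_t=I+J_\Delta a_t$, is a small addition the paper leaves implicit.
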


\begin{theorem}[Exact counterfactual forcing propagation]
	\label{thm:counterfactual-forcing}
	For the nonlinear trajectories in Eq.~\ref{eq:bias-subtracted-trajectories}, the counterfactual forcing response satisfies
	\begin{equation}
		E_T
		=\sum_{k=t_0}^{T-1}\Phi_E(T,k+1)b_k(e).
		\label{eq:exact-finite-horizon-forcing}
	\end{equation}
\end{theorem}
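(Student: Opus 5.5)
The plan is to unroll the affine recursion of Lemma~\ref{lem:onestep-forcing} by induction on $T$, starting from the initial condition $E_{t_0}=\Delta_{t_0}-\bar\Delta_{t_0}=0$ fixed in Eq.~\ref{eq:bias-subtracted-trajectories}. The statement is a discrete variation-of-constants formula. Once the one-step relation $E_{t+1}=A_t^{\rm sec}E_t+b_t(e)$ is available, no further nonlinear analysis is needed, because all the nonlinearity has already been absorbed into the state-dependent but fixed matrices $A_t^{\rm sec}$.

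The lemma is taken as given, but I will recall why it holds, since that is the only place smoothness enters. Subtracting the two updates in Eq.~\ref{eq:bias-subtracted-trajectories} gives
\[
E_{t+1}=E_t+b_t(e)+a_t(\Delta_t;e)-a_t(\bar\Delta_t;e).
\]
Apply the fundamental theorem of calculus along the segment $\xi\mapsto\bar\Delta_t+\xi E_t$. This is valid because $a_t(\cdot;e)$ is assumed continuously differentiable on a neighborhood of that segment. The result is
\[
a_t(\Delta_t;e)-a_t(\bar\Delta_t;e)=\Big(\int_0^1 J_\Delta a_t(\bar\Delta_t+\xi E_t;e)\,d\xi\Big)E_t ,
\]
which yields Eq.~\ref{eq:exact-response-step} with $A_t^{\rm sec}$ as defined in Eq.~\ref{eq:secant-propagation}. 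The second form of $A_t^{\rm sec}$ follows because $J_\Delta\Tmap_t=I+J_\Delta a_t$: the constant $b_t(e)$ has zero Jacobian.

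For the theorem itself, I will prove the more general identity
\[
E_T=\sum_{k=t_0}^{T-1}\Phi_E(T,k+1)\,b_k(e)
\]
for every $T\ge t_0$, by induction on $T$.

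\emph{Base case.} At $T=t_0$ the sum is empty, which matches $E_{t_0}=0$.

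\emph{Inductive step.} Assume the identity holds for $T$. Applying the lemma gives
\[
E_{T+1}=A_T^{\rm sec}\sum_{k=t_0}^{T-1}\Phi_E(T,k+1)\,b_k(e)+b_T(e).
\]
Two facts close the step. First, the semigroup identity $A_T^{\rm sec}\Phi_E(T,k+1)=\Phi_E(T+1,k+1)$ follows directly from the product definition of $\Phi_E$. Second, since $\Phi_E(T+1,T+1)=I$, the term $b_T(e)$ is exactly the new $k=T$ summand.

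Note that each $A_t^{\rm sec}$ depends on the realized pair $(\bar\Delta_t,\Delta_t)$. This poses no difficulty: the identity is exact along the given trajectories, and the matrices are simply evaluated there.

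I expect the main obstacle to be bookkeeping rather than substance. The points to get right are the index convention $\Phi_E(T,k+1)$, meaning that $b_k$ enters after step $k$ and is then propagated by $A_{k+1}^{\rm sec},\dots,A_{T-1}^{\rm sec}$, and the degenerate case $k=T-1$, where the product is the identity. The only analytic hypothesis is the differentiability assumption required at each step for the lemma. Differentiability is needed only on the segments, and the paper already restricts attention to smooth maps, not the masked one, so I will simply invoke that standing assumption.
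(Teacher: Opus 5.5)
Your proposal is correct and takes the same approach as the paper, which likewise uses $E_{t_0}=0$ and unrolls the one-step recursion of Lemma~\ref{lem:onestep-forcing} by repeated substitution. Your induction, using $A_T^{\rm sec}\Phi_E(T,k+1)=\Phi_E(T+1,k+1)$ and $\Phi_E(T+1,T+1)=I$, just writes out that substitution explicitly.
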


\begin{remark}[Exactness and choice of interval]
	By Theorem~\ref{thm:counterfactual-forcing}, the response to subtracting the pointwise bias is an exact finite-horizon sum for the nonlinear learned map, with propagation evaluated along the paired trajectories.  The start $t_0$ is arbitrary and can be chosen to coincide with the boundary of the training loop-depth range.
\end{remark}

For compactness, write $p_{T,k}=\Phi_E(T,k+1)b_k(e)$ and $n=T-t_0$.

\begin{corollary}[Contractive forcing response]
	\label{cor:contractive-forcing}
	Under Theorem~\ref{thm:counterfactual-forcing}, for any vector norm and its induced matrix norm, if there are constants $0\leq q<1$ and $B\geq0$ such that $\norm{A_t^{\rm sec}}\leq q$ and $\norm{b_t(e)}\leq B$ throughout the interval, then
	\begin{equation}
		\norm{E_T}
		\leq B\frac{1-q^n}{1-q}
		\leq\frac{B}{1-q}.
	\end{equation}
\end{corollary}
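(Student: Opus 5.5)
The bound follows by taking norms in the exact representation of Theorem~\ref{thm:counterfactual-forcing}. We have
\begin{equation*}
	E_T=\sum_{k=t_0}^{T-1}p_{T,k},\qquad p_{T,k}=\Phi_E(T,k+1)b_k(e).
\end{equation*}
Applying the triangle inequality gives
\begin{equation*}
	\norm{E_T}\leq\sum_{k=t_0}^{T-1}\norm{\Phi_E(T,k+1)}\,\norm{b_k(e)}.
\end{equation*}
The work is therefore to bound each propagated term and then sum.

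First, bound each transition product. By definition, $\Phi_E(T,k+1)=A_{T-1}^{\rm sec}\cdots A_{k+1}^{\rm sec}$ is a product of $T-1-k$ secant maps, with $\Phi_E(T,T)=I$. An induced matrix norm is submultiplicative, and $\norm{I}=1$. The hypothesis $\norm{A_t^{\rm sec}}\leq q$ then gives $\norm{\Phi_E(T,k+1)}\leq q^{T-1-k}$. Combining this with $\norm{b_k(e)}\leq B$ yields $\norm{p_{T,k}}\leq Bq^{T-1-k}$.

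Next, sum the terms. Reindex with $j=T-1-k$, which runs from $0$ to $n-1$ where $n=T-t_0$. This gives
\begin{equation*}
	\norm{E_T}\leq B\sum_{j=0}^{n-1}q^j=B\frac{1-q^n}{1-q},
\end{equation*}
which is valid because $0\leq q<1$. Since $1-q^n\leq 1$ and $B\geq0$, the second inequality $\leq B/(1-q)$ follows immediately.

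The main point needing care is that the hypotheses must hold for every secant map that appears. The products $\Phi_E(T,k+1)$ involve $A_t^{\rm sec}$ only for $t\in\{t_0+1,\ldots,T-1\}$, and the biases $b_k$ are needed for $k\in\{t_0,\ldots,T-1\}$. Both ranges lie inside the assumed interval. The secant maps are also well defined, since Theorem~\ref{thm:counterfactual-forcing} already assumes the differentiability needed for Lemma~\ref{lem:onestep-forcing}. Two edge cases need a remark: the convention $\Phi_E(T,T)=I$ must give $q^0=1$, and $0^0=1$ must be used when $q=0$.

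As an independent check, one can unroll the recursion of Lemma~\ref{lem:onestep-forcing} by induction. Starting from $E_{t_0}=0$, the recursion gives $\norm{E_{t+1}}\leq q\norm{E_t}+B$, which reproduces the same geometric bound.
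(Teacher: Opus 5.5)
Your proposal is correct and matches the paper's proof: bound each term $\Phi_E(T,k+1)b_k(e)$ by $q^{T-k-1}B$, using compatibility and submultiplicativity of the induced norm, then sum the geometric series. The extra index bookkeeping and the recursive cross-check via Lemma~\ref{lem:onestep-forcing} are valid but not needed.
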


\begin{corollary}[Coherent forcing accumulation]
	\label{cor:coherent-forcing}
	Under Theorem~\ref{thm:counterfactual-forcing} and the Euclidean norm, if there is a unit vector $v$ such that $\langle v,p_{T,k}\rangle\geq\beta_k\geq0$ for every $k$, then
	\begin{equation}
		\norm{E_T}_2\geq\langle v,E_T\rangle
		\geq\sum_{k=t_0}^{T-1}\beta_k.
	\end{equation}
	In particular, if $\beta_k\geq\beta>0$ for every $k$ in the interval, then $\norm{E_T}_2\geq n\beta$ at this horizon.
\end{corollary}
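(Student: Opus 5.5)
The statement immediately above is Corollary~\ref{cor:coherent-forcing}. The plan is to work directly from the exact finite-horizon representation of Theorem~\ref{thm:counterfactual-forcing}. By that theorem and the abbreviation $p_{T,k}=\Phi_E(T,k+1)b_k(e)$, we have
\begin{equation*}
E_T=\sum_{k=t_0}^{T-1}p_{T,k}.
\end{equation*}
Since the hypotheses are stated in terms of the projections $\langle v,p_{T,k}\rangle$, the natural move is to project $E_T$ onto the fixed unit vector $v$. Linearity of the Euclidean inner product then gives
\begin{equation*}
\langle v,E_T\rangle=\sum_{k=t_0}^{T-1}\langle v,p_{T,k}\rangle.
\end{equation*}

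For the lower bound, apply the assumed termwise inequalities $\langle v,p_{T,k}\rangle\geq\beta_k$ and sum over $k$. This yields $\langle v,E_T\rangle\geq\sum_k\beta_k$. For the upper comparison with $\norm{E_T}_2$, use Cauchy--Schwarz together with $\norm{v}_2=1$:
\begin{equation*}
\langle v,E_T\rangle\leq|\langle v,E_T\rangle|\leq\norm{v}_2\norm{E_T}_2=\norm{E_T}_2.
\end{equation*}
Chaining these two bounds gives the displayed inequality. Since the $\beta_k$ are nonnegative, the chain is meaningful as a norm lower bound.

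For the special case, suppose $\beta_k\geq\beta>0$ for every $k$. The interval $t_0,\ldots,T-1$ contains exactly $n=T-t_0$ indices, so $\sum_k\beta_k\geq n\beta$, and hence $\norm{E_T}_2\geq n\beta$.

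There is no real obstacle here; all the nonlinear content is absorbed by Theorem~\ref{thm:counterfactual-forcing}. The only points to check are bookkeeping ones. First, the inner product must be the vectorized Euclidean, equivalently Frobenius, inner product, so that Cauchy--Schwarz applies with the stated norm. Second, the $p_{T,k}$ must be the exact propagated terms along the paired trajectories. This holds because the secant maps $A_t^{\rm sec}$ are defined along those trajectories, so no linearization error enters.
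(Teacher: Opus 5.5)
Your proof is correct and follows essentially the same route as the paper. You expand $E_T$ via the exact finite-horizon sum, project onto $v$ by linearity, apply the termwise bounds $\langle v,p_{T,k}\rangle\geq\beta_k$, bound $\langle v,E_T\rangle$ above by $\norm{E_T}_2$ using Cauchy--Schwarz with $\norm{v}_2=1$, and count the $n=T-t_0$ terms for the uniform case.
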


\begin{corollary}[Constant-secant spectral response]
	\label{cor:spectral-forcing}
	Under Theorem~\ref{thm:counterfactual-forcing}, suppose $A_t^{\rm sec}=A$ and $b_t(e)=b(e)$ throughout the chosen interval.  If a Euclidean unit vector $v$ satisfies $A^\top v=\rho v$ for a real $\rho$, then
	\begin{equation}
		\langle v,E_T\rangle
		=\langle v,b(e)\rangle\sum_{\ell=0}^{n-1}\rho^\ell.
	\end{equation}
\end{corollary}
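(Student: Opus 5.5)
The plan is to prove Corollary~\ref{cor:spectral-forcing} directly from the exact finite-horizon representation of Theorem~\ref{thm:counterfactual-forcing}. The key observation is that with constant secant maps the transition products reduce to matrix powers. For $t_0\le k\le T-1$ we have $\Phi_E(T,k+1)=A^{T-k-1}$, with the convention $\Phi_E(T,T)=I=A^0$. Substituting $b_k(e)=b(e)$ into Eq.~\ref{eq:exact-finite-horizon-forcing} then gives
\begin{equation}
E_T=\sum_{k=t_0}^{T-1}A^{T-k-1}b(e)=\sum_{\ell=0}^{n-1}A^{\ell}b(e),
\end{equation}
after reindexing with $\ell=T-k-1$, which runs from $0$ to $n-1$ where $n=T-t_0$.

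Next, I take the Euclidean inner product with $v$ and use linearity to move $v$ inside the finite sum. For each term I use the adjoint identity $\langle v,A^\ell b\rangle=\langle (A^\top)^\ell v,b\rangle$. By induction on $\ell$, the left-eigenvector relation $A^\top v=\rho v$ gives $(A^\top)^\ell v=\rho^\ell v$, since $(A^\top)^{\ell+1}v=A^\top(\rho^\ell v)=\rho^{\ell+1}v$. Hence each term equals $\rho^\ell\langle v,b(e)\rangle$, and summing over $\ell$ yields the stated formula.

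The only point needing care is the indexing convention for $\Phi_E$. When $k=T-1$, the product $\Phi_E(T,T)$ is empty and equals $I$. For $k<T-1$, the product $A_{T-1}^{\rm sec}\cdots A_{k+1}^{\rm sec}$ contains exactly $T-k-1$ factors, each equal to $A$, so it equals $A^{T-k-1}$. Since all factors coincide, their ordering is irrelevant. I expect no genuine obstacle here. Differentiability and the existence of the secant maps are already built into the hypotheses of Theorem~\ref{thm:counterfactual-forcing}, and realness of $\rho$ is only needed so that the inner product stays within real Euclidean space.
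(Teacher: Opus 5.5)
Your proof is correct and follows essentially the same route as the paper's: reduce $\Phi_E(T,k+1)$ to $A^{T-k-1}$, reindex to obtain $E_T=\sum_{\ell=0}^{n-1}A^\ell b(e)$, and pair with $v$ using $(A^\top)^\ell v=\rho^\ell v$. Your explicit count of the factors in $\Phi_E(T,k+1)$, together with the empty-product convention, fills in a step the paper leaves implicit.
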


Time homogeneity of the learned map alone does not imply $A_t^{\rm sec}=A$.  Even when $\Tmap_t$ has no explicit step dependence, its exact secant map can vary because the paired trajectory segments vary with $t$.

To state the additional task-level condition, let
\begin{equation}
	\mathcal J_e(\Delta)
	:=\ell(C_\psi(\operatorname{RMSNorm}(h^\star+\Delta)),y)
\end{equation}
denote the token-level or sequence-aggregated negative log-likelihood for the target $y$.

\begin{proposition}[Sufficient task-loss bound under adverse readout alignment]
	\label{prop:task-loss-forcing}
	Suppose $E_T\neq0$, set $\widehat E_T=E_T/\norm{E_T}_2$, and assume $\mathcal J_e$ is continuously differentiable on a neighborhood of the segment from $\bar\Delta_T$ to $\Delta_T$.  If, for some $\mu>0$ and every $\xi\in[0,1]$,
	\begin{equation}
		\langle
		\nabla\mathcal J_e(\bar\Delta_T+\xi E_T),\widehat E_T
		\rangle\geq\mu,
	\end{equation}
	then
	\begin{equation}
		\mathcal J_e(\Delta_T)-\mathcal J_e(\bar\Delta_T)
		\geq\mu\norm{E_T}_2.
	\end{equation}
\end{proposition}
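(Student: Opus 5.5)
The plan is to apply the fundamental theorem of calculus to $\mathcal J_e$ along the segment from $\bar\Delta_T$ to $\Delta_T$, in the same way the secant map $A_t^{\rm sec}$ in Eq.~\ref{eq:secant-propagation} is built. Note that $\Delta_T=\bar\Delta_T+E_T$ by the definition $E_T=\Delta_T-\bar\Delta_T$. So I would parametrize the segment by $\gamma(\xi)=\bar\Delta_T+\xi E_T$ for $\xi\in[0,1]$ and set $g(\xi)=\mathcal J_e(\gamma(\xi))$.

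The first step is differentiability of $g$. Since $\mathcal J_e$ is continuously differentiable on a neighborhood of the segment, $g$ is $C^1$ on $[0,1]$. By the chain rule,
\begin{equation*}
	g'(\xi)=\langle\nabla\mathcal J_e(\bar\Delta_T+\xi E_T),E_T\rangle
	=\norm{E_T}_2\,\langle\nabla\mathcal J_e(\bar\Delta_T+\xi E_T),\widehat E_T\rangle .
\end{equation*}
The gradient and inner product here are the Euclidean ones on the vectorized space $\R^{Ld}$, as fixed earlier in this subsection. The division defining $\widehat E_T$ is legitimate because $E_T\neq0$.

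The second step integrates. We have $\mathcal J_e(\Delta_T)-\mathcal J_e(\bar\Delta_T)=g(1)-g(0)=\int_0^1 g'(\xi)\,d\xi$. Inserting the pointwise hypothesis $\langle\nabla\mathcal J_e(\gamma(\xi)),\widehat E_T\rangle\geq\mu$ for every $\xi$ gives $g'(\xi)\geq\mu\norm{E_T}_2$. Integrating over $[0,1]$ then yields the claimed bound.

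There is no serious obstacle; the argument is a one-dimensional mean-value estimate. The only point needing care is that the $C^1$ hypothesis must cover the whole segment, not just its endpoints, so that the integral form of the fundamental theorem of calculus applies. One could also combine the bound with Corollary~\ref{cor:coherent-forcing}, which lower-bounds $\norm{E_T}_2$, to obtain a loss increase of at least $\mu n\beta$. That combination is a remark, not part of the proof.
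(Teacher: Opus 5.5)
Your proposal is correct and matches the paper's proof: both apply the scalar fundamental theorem of calculus to $\mathcal J_e$ along the segment $\bar\Delta_T+\xi E_T$, factor out $\norm{E_T}_2$ to expose the directional derivative along $\widehat E_T$, and integrate the pointwise lower bound $\mu$ over $[0,1]$. Your explicit chain-rule computation for $g(\xi)=\mathcal J_e(\bar\Delta_T+\xi E_T)$ only spells out a step the paper leaves implicit.
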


Under the uniform coherent condition $\beta_k\geq\beta>0$ in Corollary~\ref{cor:coherent-forcing}, $\norm{E_T}_2\geq n\beta$, so the loss gap in Proposition~\ref{prop:task-loss-forcing} is at least $\mu n\beta$ at that horizon.  The uniform lower bound $\mu$ is sufficient, not necessary.  For $E_T\neq0$, the exact integral identity in the proof shows that a positive integrated directional derivative is necessary and sufficient for a positive loss gap; a zero integral makes the response invisible to this loss, and a negative integral makes the forcing beneficial relative to the bias-subtracted path.

These statements separate pointwise presence, propagated state response, horizon scaling, and task harm.  A nonzero $b_t(e)$ can produce a bounded input-dependent offset under contraction, be neutralized by cancellation, or support useful early recurrent computation; repeated injection can therefore be compatible with and sometimes helpful for fixed-point computation \citep{anil2022path,blayney2026mechanistic}.  A bounded nonzero response can still raise or lower task loss according to its integrated readout--loss alignment.  Coherent accumulation together with uniform adverse alignment yields the stronger depth-linear lower bound $\mu n\beta$, but neither property follows from $b_t(e)\neq0$ alone.  A scale-invariant readout can hide some forms of state drift \citep{sharma2026readout}, while leaving the trained operating region is not by itself sufficient for degradation.  Conversely, setting the pointwise $b_t(e)=0$ does not constrain the state-dependent term $a_t$, the Jacobian away from the anchor, or global stability.

\subsection{Proofs}
\label{app:proofs}

\subsubsection{Counterfactual Forcing Propagation}
\paragraph{Lemma~\ref{lem:onestep-forcing}.}
\begin{proof}
	Subtracting the two recurrences in Eq.~\ref{eq:bias-subtracted-trajectories} gives
	\begin{align*}
		E_{t+1}
		=E_t+b_t(e)+a_t(\Delta_t;e)-a_t(\bar\Delta_t;e).
	\end{align*}
	Because $\Delta_t=\bar\Delta_t+E_t$, the vector-valued fundamental theorem of calculus gives
	\begin{align*}
		a_t(\Delta_t;e)-a_t(\bar\Delta_t;e)
		 & =\int_0^1J_\Delta a_t(\bar\Delta_t+\xi E_t;e)E_t d\xi \\
		 & =(A_t^{\rm sec}-I)E_t.
	\end{align*}
	The preceding calculation proves Eq.~\ref{eq:exact-response-step}.
\end{proof}

\paragraph{Theorem~\ref{thm:counterfactual-forcing}.}
\begin{proof}
	Since $E_{t_0}=0$, repeated substitution of Eq.~\ref{eq:exact-response-step} in Lemma~\ref{lem:onestep-forcing} yields
	\begin{align*}
		E_T
		=\sum_{k=t_0}^{T-1}\Phi_E(T,k+1)b_k(e),
	\end{align*}
	which is Eq.~\ref{eq:exact-finite-horizon-forcing}.
\end{proof}

\paragraph{Corollary~\ref{cor:contractive-forcing}.}
\begin{proof}
	For the contractive case, compatibility of a vector norm with its induced matrix norm and submultiplicativity give
	\begin{align*}
		\norm{\Phi_E(T,k+1)b_k(e)}
		\leq q^{T-k-1}B.
	\end{align*}
	Summing the geometric series proves the claim.
\end{proof}

\paragraph{Corollary~\ref{cor:coherent-forcing}.}
\begin{proof}
	Cauchy--Schwarz, Theorem~\ref{thm:counterfactual-forcing}, and the assumed projections give
	\begin{align*}
		\norm{E_T}_2
		\geq\langle v,E_T\rangle
		=\sum_{k=t_0}^{T-1}\langle v,p_{T,k}\rangle
		\geq\sum_{k=t_0}^{T-1}\beta_k.
	\end{align*}
	When $\beta_k\geq\beta>0$ for every $k$, this sum has $n=T-t_0$ terms, each at least $\beta$, so $\norm{E_T}_2\geq n\beta$.
\end{proof}

\paragraph{Corollary~\ref{cor:spectral-forcing}.}
\begin{proof}
	In the homogeneous case,
	\begin{align*}
		E_T=\sum_{\ell=0}^{n-1}A^\ell b(e).
	\end{align*}
	The left-eigenvector condition implies $(A^\ell)^\top v=\rho^\ell v$, so taking the inner product with $v$ gives the spectral formula.
\end{proof}

\paragraph{Proposition~\ref{prop:task-loss-forcing}.}
\begin{proof}
	Write $\Delta_\xi^{\rm seg}=\bar\Delta_T+\xi E_T$.  The scalar fundamental theorem of calculus along the segment joining the two final states gives
	\begin{align*}
		\mathcal J_e(\Delta_T)-\mathcal J_e(\bar\Delta_T)
		 & =\int_0^1
		\langle\nabla\mathcal J_e(\Delta_\xi^{\rm seg}),E_T\rangle d\xi          \\
		 & =\norm{E_T}_2\int_0^1
		\langle\nabla\mathcal J_e(\Delta_\xi^{\rm seg}),\widehat E_T\rangle d\xi \\
		 & \geq\mu\norm{E_T}_2.
	\end{align*}
\end{proof}

\subsection{Reproducibility Details}
The reproducibility details summarize how stochastic replications, evaluation identities, aggregation, and verification checks are defined.  This organization keeps the numerical claims tied to semantic evaluation settings rather than local storage paths.

\subsubsection{Experimental Setup Details}
\label{app:experimental-setup-details}
We train decoder-only LMs from scratch on WikiText-2 and WikiText-103 \citep{merity2016wikitext}, OpenWebText \citep{gokaslan2019openwebtext}, and the English split of the Colossal Clean Crawled Corpus (C4) \citep{raffel2020t5}, using the Generative Pre-trained Transformer 2 (GPT-2) tokenizer \citep{radford2019language}.

For the direct OpenWebText pretraining evaluation, we use the first 200,000 training examples for training, the next 4,096 examples for validation, and the following 4,096 examples for test.  For the direct C4 pretraining evaluation, we use the first 100,000 training examples for training and two disjoint 4,096-example slices, one for validation and one for testing.

The 22M shared-block model has a hidden size of 384, 6 heads, a Swish-gated linear unit (SwiGLU) width of 1536, a sequence length of 128 unless noted, a dropout probability of 0.1, tied embeddings, and one recurrent Transformer update block.  The 50M shared-block model uses a hidden size of 768, 12 heads, and a width of 3072.  The larger WikiText-103 scale evaluation uses a hidden size of 1280, 20 heads, and a width of 5120, yielding 95.6M trainable parameters for the shared-block LM.

\paragraph{Compute and Token-Budget Accounting.} We use an explicit per-token Transformer-body FLOP proxy with a dense causal-mask attention count:
\begin{equation}
	\mathrm{FLOPs}_{\rm body,token}(T)
	=
	N_{\rm body}(T)(8d^2+6dd_{\rm ff}+4Ld),
\end{equation}
where $d$ is the hidden width, $d_{\rm ff}$ is the SwiGLU feed-forward width, and $L$ is the sequence length.  Let $c_{\rm body}$ be the number of physical shared-block body evaluations per logical recurrent step.  For shared-block looped Transformers, $N_{\rm body}(T)=c_{\rm body}T$.  The main method and ordinary baselines have $c_{\rm body}=1$, while subtractive controls that evaluate the body both at the current state and at the anchor have $c_{\rm body}=2$.  For the unshared Transformer baseline, $N_{\rm body}(T)=\min(T,N_{\rm layers})$.  Adding the tied vocabulary-projection matrix multiplication gives
\begin{equation}
	\begin{aligned}
		\mathrm{FLOPs}_{\rm body+head,token}(T)
		 & =\mathrm{FLOPs}_{\rm body,token}(T) \\
		 & +2d|\mathcal V|,
	\end{aligned}
\end{equation}
where $|\mathcal V|$ is the GPT-2 tokenizer vocabulary size.  The $4Ld$ term is the dense $L\times L$ causal-mask attention accounting term per token, not an ideal triangular-kernel attention count.  Neither expression is a total end-to-end LM FLOP count.  Both omit normalization, softmax, nonlinearities, elementwise operations, and architecture-specific anchor, initialization, and source-projection paths.  The first also omits the vocabulary head.  We therefore use these quantities only as dominant matrix-multiplication proxies and report measured accelerator latency separately.

Under this accounting, the 22M model has 39.3M body-proxy FLOPs per token and 77.9M body-plus-head-proxy FLOPs per token at $T=8$.  At $T=48$, the 22M model has 235.9M body-proxy FLOPs per token and 274.5M body-plus-head-proxy FLOPs per token.

The 50M WikiText-103 long-training setting uses a batch size of 16 with a gradient-accumulation factor of 2, so its 5k-step schedule sees 20.48M tokens.  This 20.48M-token schedule matches the 22M 5k-step context-128 training-token budget.  The 95.6M setting uses the same effective batch and accumulation schedule, again matching the 20.48M-token 5k-step budget.  The 95.6M 10k-step longer-training comparison doubles the training-token budget to 40.96M tokens.  Under the same accounting, the 95.6M body-plus-head proxy is 553.3M FLOPs per token at $T=8$.

We also train two parameter-matched unshared controls: a 4-layer 22.7M Transformer with a width of 320 and a 6-layer 22.5M Transformer with a width of 288.  The long-context, direct web-corpus pretraining, held-out web-text transfer, and 22M LAMBADA evaluations additionally include an unshared Transformer-8 fixed-depth reference.  This model is reported as an absolute unshared-depth reference rather than a parameter-matched shared-block control.  Its active depth saturates once $T\ge 8$.

We use adaptive moment estimation with decoupled weight decay (AdamW) \citep{loshchilov2019decoupled} with $(\beta_1,\beta_2)=(0.9,0.95)$, a weight decay of 0.1, gradient clipping at 1.0, cosine learning-rate decay, and bfloat16 autocast on graphics processing unit (GPU) accelerators.  Larger 95.6M WikiText-103 runs use the same effective global batch and gradient-accumulation schedule as the matched training-budget comparisons.

We additionally evaluate the trained WikiText-103 model weights on held-out OpenWebText and C4 validation slices and on the LAMBADA last-word completion benchmark \citep{paperno2016lambada}, all with no additional training or adaptation.  For Appendix Table~\ref{tab:heldout_transfer}, the held-out OpenWebText slice immediately follows the 200,000-example direct-pretraining block, and the held-out C4 slice uses the first 4,096 validation examples.

For adaptive-depth evaluation, we evaluate the trained WikiText-103 model weights under a 48-loop ceiling.  We calibrate a max-token hidden-update stopping rule on validation data to target mean loop budgets of 8, 16, and 24, and report effective active-depth statistics on the test set.  The adaptive-depth tables use offline full-trace accounting.

For LAMBADA, target-token completion PPL and target-token accuracy provide the most informative comparison at the evaluated model scales; exact last-word accuracy is tracked in the evaluation records.

\subsubsection{Seed Protocol and Aggregation}
\label{app:seed-protocol}
The default stochastic replication set uses three fixed random seeds.  The expanded replication set adds two further fixed random seeds, giving a five-seed set.  For the 95.6M WikiText-103 setting, training offsets stochastic random-number streams while keeping the sampler seed fixed.  Appendix Table~\ref{tab:scale_profile} uses one representative trained model from the default replication set for each reported method--scale pair.  Its PPL columns come from the full evaluations of those models, while latency, throughput, and peak memory come from short measurements on the profiled accelerator setup.

The expanded set is used for the primary direct comparisons.  These include the 22M WikiText-103 direct comparisons between the tuned adapter and \SCSE{}, the secondary leak diagnostics, and the 22M recurrent-step-conditioned comparison block.  They also include the 50M and 95.6M source-centered and recurrent-step-conditioned evaluations, the 95.6M 10k-step comparison between \SCSE{} and the recurrent-step-conditioned adapter, and the context-1024 baseline and \SCSE{} evaluations.  Direct C4 tuned-adapter and \SCSE{} pretraining evaluations also use the expanded set, as do the 95.6M LAMBADA \SCSE{} and recurrent-step-conditioned evaluations.

Evaluations outside the expanded-set list use the default three-seed set.  In particular, the direct OpenWebText pretraining evaluations, held-out OpenWebText and C4 transfer evaluations, and the main WikiText-2 and broad scale evaluations use the default set.

Evaluation identities are defined by the model configuration, training protocol, dataset and split, evaluation loop depth, seed, evaluation tag, and any adaptive-runtime or calibration setting that changes the reported metric.  Aggregation keeps one semantic evaluation per identity, so repeated benchmark measurements can update stored values without increasing the effective seed count.  Integrity checks verify that each reported summary uses the required seed set and does not count duplicate evaluations within the same phase, family, seed, loop depth, dataset, split, evaluation tag, and adaptive runtime mode.

All the reported PPL means are computed after token-weighted mean negative log-likelihood aggregation and then exponentiation.  Standard deviations are reported across the available seed set for each evaluation.  For paired comparisons, summaries match evaluations by seed and loop depth before taking deltas.  Reported paired deltas use small-sample 95\% $t$-interval half-widths when shown, with deterministic paired-bootstrap percentile intervals as additional robustness checks.  These intervals are paired robustness checks rather than large-sample significance claims.  Broad sweeps typically use the default three-seed set, while the direct comparisons listed above use the expanded five-seed set.

\subsubsection{Reference PyTorch Implementation}
Listing~\ref{lst:scse-code} gives an example implementation of the source-centered recurrent update used by \SCSE{} and by the secondary leak diagnostic.  The default arguments mirror Eq.~\ref{eq:source}.  In the main method, the recurrent block receives the deviation alone and is zero-preserving because its projections are bias-free.  The per-example mask additionally makes the pointwise update exactly zero at the anchor.  The optional leak and anchor-conditioning arguments reproduce the secondary diagnostic.  For \SCSE{}, the deviation-leak coefficient is fixed at $\lambda_{\rm leak}=0$; a nonzero coefficient appears only in the secondary diagnostic.

In the experiments, \texttt{core} is the shared update block with RMSNorm pre-normalization, causal self-attention, and SwiGLU updates.  \texttt{cond\_proj} implements $W_c$, \texttt{embed}, \texttt{anchor\_proj}, and \texttt{init\_delta\_proj} are learned modules, and \texttt{readout} applies the final RMSNorm and tied output head.  Relative to the mathematical notation, \texttt{anchor\_scale * anchor\_proj} implements $a_\omega$, and \texttt{e + init\_delta\_scale * init\_delta\_proj(e)} implements $H_0(e)$.

The line computing \texttt{anchor} is executed once before the loop, so \texttt{anchor\_proj} defines the fixed reference point; the loop evolves \texttt{h} in deviation coordinates around that reference rather than repeatedly regenerating or adding the anchor.

\begin{lstlisting}[style=pythoncode,float=t!,caption={PyTorch code for \SCSE{}.  Set \texttt{cond\_proj=None}, \texttt{kappa=0}, and \texttt{leak=0} for \SCSE{}; for the secondary leak diagnostic, set \texttt{leak=0.02}, let \texttt{cond\_proj} implement $W_c$, and supply that diagnostic's learned scalar gain $\kappa$, initialized to the same configured value as the baseline's learned gain $\alpha$.},label={lst:scse-code}]
def scse_step(h, anchor, core, *, step_scale=0.5, leak=0.0,
              cond_proj=None, kappa=0.0, eps=1e-8):
    delta = h - anchor
    recurrent_input = delta
    if cond_proj is not None and kappa != 0.0:
        recurrent_input = delta + kappa * cond_proj(anchor)

    q = step_scale * core(recurrent_input)
    delta_norm_sq = delta.pow(2).sum(dim=(1, 2), keepdim=True)
    active = (delta_norm_sq > eps).to(q.dtype)
    q = q * active

    return anchor + (1.0 - leak) * delta + q


def scse_unroll(input_ids, *, embed, anchor_proj, init_delta_proj,
                core, readout, loops, anchor_scale=0.1,
                init_delta_scale=0.1, **step_kwargs):
    e = embed(input_ids)
    anchor = e + anchor_scale * anchor_proj(e)
    h = e + init_delta_scale * init_delta_proj(e)
    states = [h]

    for _ in range(loops):
        h = scse_step(h, anchor, core, **step_kwargs)
        states.append(h)

    return readout(h), states, anchor
\end{lstlisting}

\subsubsection{Verification Entry Points}
\label{app:verification-entry-points}
We verify the reported evaluations at the level of semantic evaluation identities, seed-level aggregation, unequal-batch accounting, and token-weighted metric aggregation.  Evaluation identities include the model family, dataset, split, loop depth, seed, training protocol, and adaptive-runtime setting when applicable; repeated measurements are deduplicated before computing reported summaries.

Large-data reproduction requires access to the listed corpora and suitable accelerators, while the fixed schedules, aggregation rules, and evaluation identities above define how the reported summaries are constructed.

\subsection{Additional Fixed-Depth Main Comparisons}
\label{app:fixed-depth-main-comparisons}
\paragraph{Strong Shared-Block Transformer Baselines.} Appendix Table~\ref{tab:strong_followup} compares stronger shared-block Transformer baselines on WikiText-103 at the 22M scale with 5k training steps.  The capacity-matched and tuned adapters are much stronger than the looped Transformer baseline, with the tuned adapter reaching $156.4\pm0.6$ PPL at $T=8$.

\SCSE{} improves on the tuned adapter at every reported loop depth, both within the training loop-depth range and under deep extra-loop evaluation.  These results support the full source-centered reparameterization.  Appendix Table~\ref{tab:add-causal-forcing} provides the more specific forcing intervention.

\begin{table*}[t!]
	\centering
	\scriptsize
	\begin{tabular}{lrrrr}
		\toprule
		                                  & \multicolumn{4}{c}{Test PPL}                                                                            \\
		\cmidrule(lr){2-5}
		Method                            & $T=8$                        & $T=16$                 & $T=32$                 & $T=48$                 \\
		\midrule
		Looped Transformer baseline       & $179.8\pm0.8$                & $187.3\pm0.6$          & $203.3\pm0.5$          & $218.0\pm0.8$          \\
		Capacity-matched adapter          & $157.6\pm0.5$                & $165.2\pm0.5$          & $183.9\pm1.0$          & $202.3\pm1.8$          \\
		Tuned adapter, $s=0.35^{\dagger}$ & $156.4\pm0.6$                & $164.5\pm0.5$          & $184.4\pm0.8$          & $204.1\pm1.4$          \\
		\SCSE{}, $s=0.50$                 & $\mathbf{155.1\pm0.6}$       & $\mathbf{162.0\pm0.6}$ & $\mathbf{180.7\pm0.9}$ & $\mathbf{200.1\pm1.3}$ \\
		\bottomrule
	\end{tabular}
	\caption{Strong shared-block baseline comparison on WikiText-103, 22M, 5k steps.  Rows marked $^\dagger$ are explicitly tuned 22M hyperparameter variants rather than default recipe rows.  Within the training loop-depth range, the tuned adapter is a stronger non-\SCSE{} baseline than the capacity-matched adapter, while \SCSE{} has the lowest PPL among shared-block methods across the reported depths.}
	\label{tab:strong_followup}
\end{table*}

\paragraph{Scale and Training-Budget Evaluations.} The larger-scale comparisons focus on \SCSE{} as the reported source-centered method.  The direct extra-loop evaluations below show that \SCSE{} is stronger at deep evaluation depths.

For the stronger 50M WikiText-103 setting with a gradient-accumulation factor of 2, Appendix Table~\ref{tab:scale50m_x48} extends evaluation to $T=48$ while keeping the main comparison focused on \SCSE{}.  The paired PPL deltas for \SCSE{} relative to the looped Transformer baseline are $-28.0\pm2.6$ at $T=8$, $-26.9\pm3.4$ at $T=24$, and $-22.2\pm5.5$ at $T=48$.

\begin{table*}[t!]
	\centering
	\scriptsize
	\begin{tabular}{lrrrrr}
		\toprule
		                            & \multicolumn{5}{c}{Test PPL}                                                                                                     \\
		\cmidrule(lr){2-6}
		Method                      & $T=8$                        & $T=16$                 & $T=24$                 & $T=32$                 & $T=48$                 \\
		\midrule
		Looped Transformer baseline & $151.1\pm0.9$                & $156.4\pm0.9$          & $162.5\pm1.0$          & $168.3\pm1.0$          & $178.9\pm1.3$          \\
		\SCSE{}                     & $\mathbf{123.1\pm0.2}$       & $\mathbf{128.6\pm0.2}$ & $\mathbf{135.5\pm0.4}$ & $\mathbf{142.6\pm0.6}$ & $\mathbf{156.4\pm1.1}$ \\
		\bottomrule
	\end{tabular}
	\caption{Extended 50M WikiText-103 5k-step comparison with evaluation through $T=48$ on the same 20.48M-token training budget.  The table reports the source-centered comparison: \SCSE{} improves on the looped Transformer baseline at every reported loop depth.}
	\label{tab:scale50m_x48}
\end{table*}

At 95.6M, we test whether the source-centered advantage over stronger non-\SCSE{} controls persists.  Appendix Table~\ref{tab:scale95m} shows that a tuned adapter cuts $T=8$ PPL from $117.1\pm0.5$ to $98.6\pm0.5$, while \SCSE{} remains best at $96.9\pm0.3$.  The same ordering holds at $T=16$, $T=24$, and $T=32$; at $T=32$, \SCSE{} reaches $113.7\pm0.7$ compared with $116.1\pm0.9$ for the tuned adapter.  Within the training loop-depth range, mean PPL improves from $T=4$ to $T=8$ for all three reported methods.  Mean PPL decreases from $117.4$ to $117.1$ for the looped Transformer baseline, from $98.9$ to $98.6$ for the tuned adapter, and from $97.3$ to $96.9$ for \SCSE{}.  The improvement within the training loop-depth range grows with the training budget.  Under the 10k-step protocol of Appendix Table~\ref{tab:long10k_95m}, the mean improvements between $T=4$ and $T=8$ are $0.7$ PPL for the recurrent-step-conditioned adapter and $0.8$ PPL for \SCSE{}.  Loop-depth scaling therefore helps up to the boundary of the training loop-depth range, and the degradation at $T>8$ is specific to evaluating the same trained weights at extrapolated depths.

\begin{table*}[t!]
	\centering
	\scriptsize
	\begin{tabular}{lrrrrr}
		\toprule
		                            & \multicolumn{5}{c}{Test PPL}                                                                                                    \\
		\cmidrule(lr){2-6}
		Method                      & $T=4$                        & $T=8$                 & $T=16$                 & $T=24$                 & $T=32$                 \\
		\midrule
		Looped Transformer baseline & $117.4\pm0.5$                & $117.1\pm0.5$         & $122.1\pm0.6$          & $128.0\pm0.7$          & $133.9\pm0.9$          \\
		Tuned adapter, $s=0.35$     & $98.9\pm0.3$                 & $98.6\pm0.5$          & $103.7\pm0.7$          & $109.8\pm0.9$          & $116.1\pm0.9$          \\
		\SCSE{}                     & $\mathbf{97.3\pm0.2}$        & $\mathbf{96.9\pm0.3}$ & $\mathbf{101.7\pm0.5}$ & $\mathbf{107.6\pm0.6}$ & $\mathbf{113.7\pm0.7}$ \\
		\bottomrule
	\end{tabular}
	\caption{Larger-scale WikiText-103 strong-baseline comparison at the 95.6M shared-block scale, with models trained for 5k steps on 20.48M tokens using the same effective batch and gradient-accumulation schedule as the matched training-budget comparisons.  Each entry reports the mean test PPL $\pm$ one standard deviation.  Boldface marks the best value in the relevant comparison; when an unshared fixed-depth reference row is included, boldface is restricted to the shared-block rows unless stated otherwise.  Unless a caption states otherwise, result tables use the stochastic replication sets defined by the replication protocol.  Among the reported non-\SCSE{} controls, the tuned adapter is strongest within the training loop-depth range ($T=4,8$), but \SCSE{} gives the lowest PPL at every reported depth.  For every method, $T=8$ PPL is lower than $T=4$ PPL, so loop-depth scaling helps within the training loop-depth range, and degradation appears only at the extrapolated depths $T>8$.  Under the stated accounting, the body-plus-head proxy is 553.3M FLOPs per token at $T=8$.}
	\label{tab:scale95m}
\end{table*}

\subsection{Longer Training at 95.6M}
\label{app:longer-training-95m}

Appendix Table~\ref{tab:long10k_95m} evaluates the 95.6M \SCSE{} advantage under a longer training protocol rather than only under the 5k-step training-token budget.  We double the training-token budget from 20.48M to 40.96M tokens, while keeping the same loop-depth sampler, optimizer, effective batch schedule, and recurrent-step-conditioned adapter hyperparameters.

Both methods improve substantially in absolute PPL.  The recurrent-step-conditioned adapter reaches $68.5\pm0.2$ PPL at $T=8$ and $103.4\pm0.8$ at $T=48$, which is much stronger than its 5k-step counterpart.  \SCSE{} nevertheless remains better at every measured depth, reaching $66.7\pm0.3$ at $T=8$, $76.9\pm0.5$ at $T=24$, and $98.3\pm1.0$ at $T=48$.

The paired \SCSE{} minus recurrent-step-conditioned deltas are $-1.80$, $-3.45$, and $-5.10$ PPL at $T=8,24,48$, with deterministic paired-bootstrap intervals $[-2.13,-1.49]$, $[-3.85,-2.98]$, and $[-6.11,-3.99]$.  The longer-training comparison reinforces that recurrent-step conditioning alone does not account for the source-centered quality advantage.

\begin{table*}[t!]
	\centering
	\scriptsize
	\begin{tabular}{lrrrrr}
		\toprule
		                                   & \multicolumn{5}{c}{Test PPL}                                                                                                 \\
		\cmidrule(lr){2-6}
		Method                             & $T=4$                        & $T=8$                 & $T=16$                & $T=24$                & $T=48$                \\
		\midrule
		Recurrent-step-conditioned adapter & $69.2\pm0.2$                 & $68.5\pm0.2$          & $73.6\pm0.3$          & $80.4\pm0.3$          & $103.4\pm0.8$         \\
		\SCSE{}                            & $\mathbf{67.5\pm0.3}$        & $\mathbf{66.7\pm0.3}$ & $\mathbf{70.9\pm0.4}$ & $\mathbf{76.9\pm0.5}$ & $\mathbf{98.3\pm1.0}$ \\
		\bottomrule
	\end{tabular}
	\caption{Longer-training comparison at 95.6M on WikiText-103.  Both methods follow the 95.6M training protocol in Table~\ref{tab:stepcond_wt103}, but are trained for 10k steps and 40.96M tokens instead of 5k steps and 20.48M tokens.  Each entry reports the mean test PPL $\pm$ one standard deviation.  \SCSE{} preserves its advantage over the recurrent-step-conditioned adapter with the larger training-token budget.}
	\label{tab:long10k_95m}
\end{table*}

\subsection{Adaptive-Depth and Runtime}
\label{app:adaptive-runtime}
\paragraph{Adaptive-Depth Evaluation.} A practical advantage of shared recurrent depth is that a single set of trained weights can expose a continuous inference loop-budget knob rather than a single fixed loop depth.  The adaptive-depth evaluation tests that knob with inference-time, example-level stopping.  Appendix Table~\ref{tab:adaptive_wt103} evaluates this property using a stronger 22M comparison set consisting of the looped Transformer baseline, the tuned adapter, the recurrent-step-conditioned adapter, and \SCSE{}.  We calibrate a max-token hidden-update stopping rule on validation data to target mean loop budgets of 8, 16, and 24 under a 48-loop ceiling and then evaluate on the test set.

The result preserves the fixed-depth ordering under loop-budgeted inference.  \SCSE{} is the best shared-block method at target mean loop budgets of 8, 16, and 24.  \SCSE{} reaches $156.0$, $163.3$, and $171.8$ PPL.  The tuned adapter reaches $157.9$, $165.4$, and $174.1$, and the recurrent-step-conditioned adapter reaches $158.3$, $165.8$, and $175.1$.

\begin{table*}[t!]
	\centering
	\scriptsize
	\begin{tabular}{lrrrrr}
		\toprule
		                                   & \multicolumn{5}{c}{PPL}                                                                                   \\
		\cmidrule(lr){2-6}
		Method                             & Static $T=8$            & Adaptive 8       & Adaptive 16      & Adaptive 24      & Static $T=48$          \\
		\midrule
		Looped Transformer baseline        & $179.8\pm0.8$           & 182.6            & 188.7            & 195.3            & $218.0\pm0.8$          \\
		Tuned adapter, $s=0.35$            & $156.4\pm0.6$           & 157.9            & 165.4            & 174.1            & $204.1\pm1.4$          \\
		Recurrent-step-conditioned adapter & $156.7\pm0.5$           & 158.3            & 165.8            & 175.1            & $206.1\pm1.4$          \\
		\SCSE{}, $s=0.50$                  & $\mathbf{155.1\pm0.6}$  & $\mathbf{156.0}$ & $\mathbf{163.3}$ & $\mathbf{171.8}$ & $\mathbf{200.1\pm1.3}$ \\
		\bottomrule
	\end{tabular}
	\caption{Validation-calibrated adaptive-depth evaluation on WikiText-103, 22M, 5k steps, with a 48-loop ceiling and a simple max-token hidden-update stopping rule.  We report the stronger comparison set including tuned and recurrent-step-conditioned adapters.  Thresholds are chosen on validation to target mean loop budgets of 8, 16, and 24, and performance is then evaluated on the test set.  \SCSE{} remains the strongest shared-block choice when the reusable loop-budget knob is evaluated under per-example stopping.}
	\label{tab:adaptive_wt103}
\end{table*}

The same operating-regime split holds when we scale the loop-budgeted evaluation.  Appendix Table~\ref{tab:adaptive_scale} reports the corresponding adaptive-depth evaluation on the 50M and 95.6M comparison sets with target mean loop budgets of 8, 16, and 24 under the same 48-loop ceiling.  The thresholds remain reasonably calibrated: most achieved mean active depths are within about $0.4$ loops of the target, and the largest reported deviation is about $1.2$ loops.

At 50M, \SCSE{} remains the best shared-block method at every target mean loop budget.  \SCSE{} reaches $123.5$, $128.9$, and $135.5$ PPL at target mean loop budgets of 8, 16, and 24.  The recurrent-step-conditioned adapter reaches $126.4$, $131.9$, and $138.6$.  At 95.6M, \SCSE{} again remains best.  \SCSE{} reaches $97.1$, $101.3$, and $106.6$.  The tuned adapter reaches $98.9$, $103.1$, and $108.4$, and the recurrent-step-conditioned adapter reaches $101.5$, $105.9$, and $111.0$.

These scale-up adaptive-depth evaluations reinforce the same message as the fixed-depth results.  Source-centered coordinates, not recurrent-step conditioning alone, yield the best quality under a reusable loop-budget knob.

\begin{table*}[t!]
	\centering
	\scriptsize
	\begin{tabular}{lrrrrr}
		\toprule
		                                   & \multicolumn{5}{c}{PPL}                                                                                   \\
		\cmidrule(lr){2-6}
		Method                             & Static $T=8$            & Adaptive 8       & Adaptive 16      & Adaptive 24      & Static $T=48$          \\
		\midrule
		\multicolumn{6}{l}{50M}                                                                                                                        \\
		Looped Transformer baseline        & $151.1\pm0.9$           & 153.8            & 158.3            & 162.8            & $178.9\pm1.3$          \\
		Recurrent-step-conditioned adapter & $125.7\pm0.4$           & 126.4            & 131.9            & 138.6            & $160.1\pm0.2$          \\
		\SCSE{}                            & $\mathbf{123.1\pm0.2}$  & $\mathbf{123.5}$ & $\mathbf{128.9}$ & $\mathbf{135.5}$ & $\mathbf{156.4\pm1.1}$ \\
		\midrule
		\multicolumn{6}{l}{95.6M}                                                                                                                      \\
		Looped Transformer baseline        & $117.1\pm0.5$           & 117.8            & 122.2            & 127.6            & $145.1\pm1.2$          \\
		Tuned adapter, $s=0.35$            & $98.6\pm0.5$            & 98.9             & 103.1            & 108.4            & $128.7\pm0.9$          \\
		Recurrent-step-conditioned adapter & $100.8\pm0.5$           & 101.5            & 105.9            & 111.0            & $130.6\pm1.0$          \\
		\SCSE{}                            & $\mathbf{96.9\pm0.3}$   & $\mathbf{97.1}$  & $\mathbf{101.3}$ & $\mathbf{106.6}$ & $\mathbf{125.9\pm0.9}$ \\
		\bottomrule
	\end{tabular}
	\caption{Scale-up adaptive-depth evaluation on WikiText-103 comparison sets.  All methods use a 48-loop ceiling and a validation-calibrated max-token hidden-update stopping rule with target mean loop budgets of 8, 16, and 24.  Values are test means.  Most achieved mean active depths are within about 0.4 loops of the targets, and the largest deviation is about 1.2 loops.}
	\label{tab:adaptive_scale}
\end{table*}

\paragraph{Accelerator Profiling.} Appendix Table~\ref{tab:scale_profile} reports GPU accelerator measurements for one trained model per reported method--scale pair at the larger 50M and 95.6M WikiText-103 scales.  At 50M, \SCSE{} is the fastest quality-competitive shared-block method both at the boundary of the training loop-depth range, $T=8$, and at the deep extra-loop depth $T=48$.  At $T=8$, \SCSE{} reaches $123.3$ PPL at 17.6 milliseconds (ms) per batch, compared with $125.3$ at 19.2 ms for the recurrent-step-conditioned adapter and $150.1$ at 24.2 ms for the looped Transformer baseline.  At $T=48$, \SCSE{} reaches $158.2$ PPL at 77.8 ms compared with $160.4$ at 87.5 ms for the recurrent-step-conditioned adapter.

At 95.6M, \SCSE{} is the best-quality shared-block method and remains close to the fastest measured method.  \SCSE{} reaches $96.4$ PPL at $T=8$ in 19.6 ms per batch and $124.3$ at $T=48$ in 89.9 ms per batch.  The tuned adapter reaches $98.0$ and $127.7$ PPL in $20.0$ and $92.9$ ms, respectively, and the recurrent-step-conditioned adapter reaches $100.7$ and $130.0$ PPL in $21.2$ and $100.1$ ms, respectively.  Peak allocated accelerator memory, reported in megabytes (MB), remains closely clustered at each scale, so the main practical distinction among strong shared-block methods is quality and loop-step overhead rather than a large memory gap.

\begin{table*}[t!]
	\centering
	\scriptsize
	\resizebox{\textwidth}{!}{%
		\begin{tabular}{lrrrrrr}
			\toprule
			                                   & \multicolumn{2}{c}{$T=8$} & \multicolumn{4}{c}{$T=48$ profile}                                                                                             \\
			\cmidrule(lr){2-3}\cmidrule(lr){4-7}
			Method                             & PPL                       & Latency (ms)                       & PPL            & Latency (ms)  & Throughput ($10^3$ tokens per second) & Peak memory (MB) \\
			\midrule
			\multicolumn{7}{l}{50M}                                                                                                                                                                         \\
			Looped Transformer baseline        & $150.1$                   & $24.2$                             & $178.4$        & \textbf{75.2} & \textbf{27.2}                         & \textbf{1365.5}  \\
			Recurrent-step-conditioned adapter & $125.3$                   & $19.2$                             & $160.4$        & $87.5$        & $23.4$                                & $1379.8$         \\
			\SCSE{}                            & $\mathbf{123.3}$          & \textbf{17.6}                      & \textbf{158.2} & $77.8$        & $26.3$                                & $1373.3$         \\
			\midrule
			\multicolumn{7}{l}{95.6M}                                                                                                                                                                       \\
			Looped Transformer baseline        & $117.3$                   & \textbf{18.3}                      & $145.4$        & \textbf{86.9} & \textbf{11.8}                         & \textbf{1268.8}  \\
			Tuned adapter, $s=0.35$            & $98.0$                    & $20.0$                             & $127.7$        & $92.9$        & $11.0$                                & $1284.3$         \\
			Recurrent-step-conditioned adapter & $100.7$                   & $21.2$                             & $130.0$        & $100.1$       & $10.2$                                & $1294.6$         \\
			\SCSE{}                            & $\mathbf{96.4}$           & $19.6$                             & \textbf{124.3} & $89.9$        & $11.4$                                & $1281.4$         \\
			\bottomrule
		\end{tabular}
	}%
	\caption{Accelerator profile for one trained model per reported method--scale pair at the larger WikiText-103 scales.  The looped Transformer baseline is fastest at 50M $T=48$ but has much worse PPL.  \SCSE{} is the fastest quality-competitive shared-block method at 50M and remains competitive with the fastest 95.6M method while also being the best-quality shared-block method.}
	\label{tab:scale_profile}
\end{table*}

\subsection{Context and Corpus Extension}
\label{app:context-corpus-extension}
\paragraph{Longer Contexts.} Appendix Table~\ref{tab:contexts} revisits the longer-context setting under a 5k-step training protocol at sequence lengths 512 and 1024, using batch sizes 8 and 4, respectively, so that each optimizer step still sees 4096 tokens.  Source-centered coordinates remain beneficial in the longer-context setting.

At a context length of 512, the \SCSE{} minus tuned-adapter paired PPL deltas are $-3.34\pm3.81$ at $T=8$, $-6.45\pm2.91$ at $T=24$, and $-9.41\pm3.06$ at $T=48$.  At a context length of 1024, the same paired deltas are $-4.15\pm3.17$, $-5.94\pm3.38$, and $-5.00\pm6.31$.  Thus, the longer-context table focuses the primary shared-block analysis on comparisons of \SCSE{} with the tuned adapter and looped Transformer baseline.

The unshared Transformer-8 fixed-depth reference is included as an absolute unshared-depth reference rather than a parameter-matched shared-block control.  The reference is still the absolute PPL reference at $T=8$.  The reference reaches $153.7\pm0.8$ and $181.6\pm1.9$ at context lengths of 512 and 1024, respectively.  However, the reference requires roughly 77 MB rather than 45 MB of 16-bit parameter memory and trains more slowly than the shared-block source-centered method.

\begin{table*}[t!]
	\centering
	\small
	\begin{tabular}{lrrrrr}
		\toprule
		                                              & \multicolumn{5}{c}{Test PPL}                                                                                                     \\
		\cmidrule(lr){2-6}
		Method                                        & $T=4$                        & $T=8$                  & $T=16$                 & $T=24$                 & $T=48$                 \\
		\midrule
		\multicolumn{6}{l}{Context 512}                                                                                                                                                  \\
		Looped Transformer baseline                   & $229.8\pm2.1$                & $230.1\pm1.9$          & $237.8\pm1.6$          & $246.6\pm1.4$          & $273.8\pm0.7$          \\
		Tuned adapter, $s=0.35$                       & $193.3\pm1.2$                & $191.8\pm1.2$          & $200.7\pm1.2$          & $212.3\pm1.2$          & $249.5\pm1.6$          \\
		\SCSE{}                                       & $\mathbf{190.8\pm0.5}$       & $\mathbf{188.4\pm0.5}$ & $\mathbf{195.6\pm0.5}$ & $\mathbf{205.9\pm0.6}$ & $\mathbf{240.1\pm1.7}$ \\
		\addlinespace[2pt]
		Unshared Transformer-8, fixed-depth reference & not reported                 & $153.7\pm0.8$          & $153.7\pm0.8$          & $153.7\pm0.8$          & $153.7\pm0.8$          \\
		\midrule
		\multicolumn{6}{l}{Context 1024}                                                                                                                                                 \\
		Looped Transformer baseline                   & $264.2\pm2.0$                & $264.9\pm2.1$          & $272.5\pm2.1$          & $281.1\pm2.1$          & $308.5\pm2.1$          \\
		Tuned adapter, $s=0.35$                       & $226.5\pm0.5$                & $224.7\pm0.6$          & $232.9\pm0.7$          & $243.8\pm0.7$          & $278.6\pm0.7$          \\
		\SCSE{}                                       & $\mathbf{223.7\pm1.3}$       & $\mathbf{221.1\pm1.2}$ & $\mathbf{228.0\pm1.1}$ & $\mathbf{238.1\pm1.0}$ & $\mathbf{272.6\pm2.1}$ \\
		\addlinespace[2pt]
		Unshared Transformer-8, fixed-depth reference & not reported                 & $181.6\pm1.9$          & $181.6\pm1.9$          & $181.6\pm1.9$          & $181.6\pm1.9$          \\
		\bottomrule
	\end{tabular}
	\caption{WikiText-103 longer-context evaluation, 22M, 5k steps.  \SCSE{} is the best shared-block method at both context lengths and every reported loop depth.  The Unshared Transformer-8 row is an absolute fixed-depth reference rather than a parameter-matched shared-block control; the reference is trained with a fixed depth of 8 layers, so results for the reference are reported only from $T=8$ onward, and its active depth saturates once $T\ge 8$.}
	\label{tab:contexts}
\end{table*}

\subsubsection{Cross-Corpus and Out-of-Domain Evaluation}
\label{app:cross-corpus-ood-evaluation}
\paragraph{Direct Web-Corpus Pretraining.} To test whether the effect is WikiText-specific, Appendix Tables~\ref{tab:owt_train} and~\ref{tab:c4_train} report the full strong-baseline suite at the 22M scale with 5k training steps under direct web-corpus pretraining on two web-text corpora: OpenWebText and English C4.

On OpenWebText, \SCSE{} is strongest among reported shared-block methods at every reported loop depth.  The paired \SCSE{} minus tuned-adapter deltas are $-1.96\pm1.98$, $-5.12\pm3.13$, and $-10.42\pm14.41$ PPL at $T=8,24,48$, so the OpenWebText result is consistent with the main source-centered comparison but is noisier at the deepest setting.

On English C4 \citep{raffel2020t5}, the same primary comparison is clearer.  \SCSE{} is strongest among reported shared-block methods at $T=8$, $T=24$, and $T=48$.  The paired \SCSE{} minus tuned-adapter deltas are $-3.32\pm1.28$, $-7.12\pm1.30$, and $-8.74\pm2.36$ PPL at $T=8,24,48$, with paired-bootstrap intervals $[-3.97,-2.40]$, $[-7.94,-6.23]$, and $[-10.47,-7.60]$.  C4 therefore provides a cleaner cross-corpus check of the source-centered pattern.

\paragraph{Cross-Corpus and Out-of-Domain Tables}
\label{app:cross-corpus-ood}
\paragraph{Direct Web-Corpus Pretraining Tables.} Appendix Tables~\ref{tab:owt_train} and~\ref{tab:c4_train} provide the tabulated direct web-corpus pretraining results referenced in the main text.

\begin{table*}[t!]
	\centering
	\small
	\begin{tabular}{lrrrr}
		\toprule
		                                              & \multicolumn{4}{c}{PPL}                                                                            \\
		\cmidrule(lr){2-5}
		Method                                        & $T=8$                   & $T=16$                 & $T=24$                 & $T=48$                 \\
		\midrule
		Looped Transformer baseline                   & $285.7\pm0.3$           & $295.3\pm0.2$          & $306.3\pm0.2$          & $336.8\pm0.5$          \\
		Tuned adapter, $s=0.35$                       & $255.3\pm0.9$           & $265.9\pm1.3$          & $279.7\pm2.2$          & $328.0\pm9.4$          \\
		\SCSE{}                                       & $\mathbf{253.3\pm0.8}$  & $\mathbf{262.4\pm1.0}$ & $\mathbf{274.5\pm1.2}$ & $\mathbf{317.5\pm3.6}$ \\
		\addlinespace[2pt]
		Unshared Transformer-8, fixed-depth reference & $222.0\pm0.8$           & $222.0\pm0.8$          & $222.0\pm0.8$          & $222.0\pm0.8$          \\
		\bottomrule
	\end{tabular}
	\caption{Direct OpenWebText pretraining, 22M, 5k steps, with the same 20.48M-token training budget as the WikiText-103 22M longer-training settings.  \SCSE{} is the strongest reported shared-block method across the reported loop depths, though the deepest paired comparison is noisier than the C4 counterpart.  The Unshared Transformer-8 row is an absolute fixed-depth reference rather than a parameter-matched shared-block control, and its active depth saturates once $T\ge 8$.}
	\label{tab:owt_train}
\end{table*}

\begin{table*}[t!]
	\centering
	\scriptsize
	\begin{tabular}{lrrrr}
		\toprule
		                                              & \multicolumn{4}{c}{PPL}                                                                            \\
		\cmidrule(lr){2-5}
		Method                                        & $T=8$                   & $T=16$                 & $T=24$                 & $T=48$                 \\
		\midrule
		Looped Transformer baseline                   & $348.8\pm3.7$           & $360.2\pm3.5$          & $372.8\pm3.4$          & $407.5\pm3.3$          \\
		Tuned adapter, $s=0.35$                       & $304.8\pm0.6$           & $318.0\pm0.6$          & $334.5\pm0.8$          & $383.2\pm2.0$          \\
		\SCSE{}                                       & $\mathbf{301.5\pm1.0}$  & $\mathbf{312.5\pm1.1}$ & $\mathbf{327.4\pm1.4}$ & $\mathbf{374.4\pm2.4}$ \\
		\addlinespace[2pt]
		Unshared Transformer-8, fixed-depth reference & $265.0\pm0.3$           & $265.0\pm0.3$          & $265.0\pm0.3$          & $265.0\pm0.3$          \\
		\bottomrule
	\end{tabular}
	\caption{Direct English C4 pretraining, 22M, 5k steps, using 100,000 training examples and disjoint 4,096-example validation and test slices.  \SCSE{} is the strongest reported shared-block method across the reported loop depths.  The Unshared Transformer-8 row is an absolute fixed-depth reference rather than a parameter-matched shared-block control, and its active depth saturates once $T\ge 8$.}
	\label{tab:c4_train}
\end{table*}

\paragraph{Held-Out Web-Text Transfer.} To separate direct web-corpus pretraining from held-out web-text transfer, Appendix Table~\ref{tab:heldout_transfer} evaluates the trained WikiText-103 model weights on held-out OpenWebText and held-out C4 slices with no additional training.  These benchmarks probe evaluation-only corpus shift on natural web text with the same GPT-2 tokenizer.

On held-out OpenWebText, the ordering within the training loop-depth range is mixed across scales, but the deep extra-loop ordering is consistent.  At 22M, \SCSE{} is best at $T=48$ with $3.87\pm0.32$k PPL.  At 50M, the recurrent-step-conditioned adapter is the best method within the training loop-depth range at $T=8$ with $3.66\pm0.33$k PPL, while \SCSE{} is best at $T=48$ with $3.47\pm0.22$k PPL.  At 95.6M, the tuned adapter is the best method within the training loop-depth range at $T=8$, while \SCSE{} is the best deep extra-loop method at $T=48$.

Held-out C4 has lower absolute PPL values than held-out OpenWebText in the held-out transfer setup and gives a more consistent primary ordering.  At 22M, \SCSE{} is best among shared-block methods at $T=48$.  At 50M, \SCSE{} is best at both depths among the reported shared-block methods.  At 95.6M, \SCSE{} remains best at both $T=8$ and $T=48$ among the reported shared-block methods.  Relative to the recurrent-step-conditioned adapter, its paired mean deltas are $-0.136$k and $-0.284$k PPL at $T=8$ and $T=48$, with paired-bootstrap intervals $[-0.207,-0.054]$k and $[-0.394,-0.137]$k.

These results support the same source-centered pattern under direct web-corpus pretraining and held-out web-text transfer, while tuned-adapter and recurrent-step-conditioned controls remain strongest in some settings within the training loop-depth range.

\paragraph{Held-Out Web-Text Transfer Table.} Appendix Table~\ref{tab:heldout_transfer} provides the tabulated held-out web-text transfer results referenced in the main text.

\begin{table*}[t!]
	\centering
	\scriptsize
	\begin{tabular}{lrrrr}
		\toprule
		                                              & \multicolumn{2}{c}{OpenWebText PPL ($10^3$)} & \multicolumn{2}{c}{C4 PPL ($10^3$)}                                                   \\
		\cmidrule(lr){2-3}\cmidrule(lr){4-5}
		Method                                        & $T=8$                                        & $T=48$                              & $T=8$                  & $T=48$                 \\
		\midrule
		\multicolumn{5}{l}{22M WikiText-103 5k trained weights}                                                                                                                              \\
		Tuned adapter, $s=0.35$                       & $\mathbf{4.03\pm0.28}$                       & $4.09\pm0.45$                       & $3.36\pm0.17$          & $3.55\pm0.25$          \\
		Recurrent-step-conditioned adapter            & $4.10\pm0.54$                                & $4.55\pm0.84$                       & $3.73\pm0.38$          & $4.19\pm0.54$          \\
		\SCSE{}                                       & $4.06\pm0.26$                                & $\mathbf{3.87\pm0.32}$              & $\mathbf{3.26\pm0.06}$ & $\mathbf{3.30\pm0.17}$ \\
		\addlinespace[2pt]
		Unshared Transformer-8, fixed-depth reference & $4.11\pm0.30$                                & $4.11\pm0.30$                       & $3.00\pm0.09$          & $3.00\pm0.09$          \\
		\midrule
		\multicolumn{5}{l}{50M WikiText-103 5k trained weights}                                                                                                                              \\
		Looped Transformer baseline                   & $3.81\pm0.11$                                & $3.94\pm0.14$                       & $3.08\pm0.11$          & $3.24\pm0.07$          \\
		Recurrent-step-conditioned adapter            & $\mathbf{3.66\pm0.33}$                       & $4.02\pm0.36$                       & $2.80\pm0.14$          & $3.14\pm0.20$          \\
		\SCSE{}                                       & $3.69\pm0.07$                                & $\mathbf{3.47\pm0.22}$              & $\mathbf{2.72\pm0.04}$ & $\mathbf{2.70\pm0.15}$ \\
		\midrule
		\multicolumn{5}{l}{95.6M WikiText-103 5k trained weights}                                                                                                                            \\
		Looped Transformer baseline                   & $3.69\pm0.13$                                & $3.75\pm0.18$                       & $2.85\pm0.01$          & $2.97\pm0.04$          \\
		Tuned adapter, $s=0.35$                       & $\mathbf{3.53\pm0.21}$                       & $3.60\pm0.23$                       & $2.68\pm0.08$          & $2.83\pm0.13$          \\
		Recurrent-step-conditioned adapter            & $3.56\pm0.14$                                & $3.70\pm0.20$                       & $2.75\pm0.07$          & $2.93\pm0.13$          \\
		\SCSE{}                                       & $3.60\pm0.20$                                & $\mathbf{3.42\pm0.31}$              & $\mathbf{2.62\pm0.04}$ & $\mathbf{2.65\pm0.10}$ \\
		\bottomrule
	\end{tabular}
	\caption{Held-out web-text benchmarks on 4,096-example OpenWebText and English C4 evaluation slices.  The OpenWebText slice immediately follows the 200,000-example direct-pretraining block, and C4 uses the first 4,096 examples of the English validation split.  All methods use their respective trained WikiText-103 weights with no extra training or adaptation.  The Unshared Transformer-8 row is an absolute fixed-depth reference rather than a parameter-matched shared-block control.  Each entry reports the mean $\pm$ one standard deviation.}
	\label{tab:heldout_transfer}
\end{table*}

\paragraph{Out-of-Domain Completion.} Appendix Table~\ref{tab:lambada} adds the LAMBADA last-word completion benchmark.  Exact last-word accuracy is near the floor at the evaluated model scales, so we emphasize target-token completion PPL and target-token accuracy on the held-out answer span.

The 22M suite preserves the main deep extra-loop advantage: \SCSE{} is the best shared-block method at $T=48$.  At 50M and 95.6M, recurrent-step conditioning improves the looped Transformer baseline and can approach the tuned adapter on target-token PPL, but \SCSE{} remains the strongest shared-block method on the main completion metrics.

At 95.6M, the paired \SCSE{} minus recurrent-step-conditioned deltas are $-1.09$k target-token completion PPL at $T=8$ and $-2.87$k at $T=32$, with bootstrap intervals $[-2.04,-0.13]$k and $[-4.43,-1.47]$k.

\paragraph{Out-of-Domain Completion Table.} Appendix Table~\ref{tab:lambada} provides the tabulated LAMBADA completion results referenced in the main text.

\begin{table*}[t!]
	\centering
	\scriptsize
	\begin{tabular}{lrrcrr}
		\toprule
		                                              & \multicolumn{2}{c}{$T=8$} & \multicolumn{3}{c}{Reported comparison depth}                                                              \\
		\cmidrule(lr){2-3}\cmidrule(lr){4-6}
		Method                                        & Completion PPL ($10^3$)   & Accuracy (\%)                                 & Depth   & Completion PPL ($10^3$) & Accuracy (\%)          \\
		\midrule
		\multicolumn{6}{l}{22M WikiText-103 5k}                                                                                                                                                \\
		Tuned adapter, $s=0.35$                       & $23.8\pm1.2$              & $1.27\pm0.19$                                 & $T=48$  & $25.9\pm2.5$            & $0.64\pm0.16$          \\
		Recurrent-step-conditioned adapter            & $30.3\pm4.3$              & $1.33\pm0.36$                                 & $T=48$  & $34.5\pm7.1$            & $0.55\pm0.15$          \\
		\SCSE{}                                       & $\mathbf{23.7\pm1.2}$     & $\mathbf{1.52\pm0.11}$                        & $T=48$  & $\mathbf{24.2\pm0.5}$   & $\mathbf{0.84\pm0.19}$ \\
		\addlinespace[2pt]
		Unshared Transformer-8, fixed-depth reference & $21.1\pm0.9$              & $1.33\pm0.06$                                 & $T\ge8$ & $21.1\pm0.9$            & $1.33\pm0.06$          \\
		\midrule
		\multicolumn{6}{l}{50M WikiText-103 5k}                                                                                                                                                \\
		Looped Transformer baseline                   & $25.0\pm1.2$              & $0.97\pm0.13$                                 & $T=48$  & $26.5\pm3.5$            & $0.59\pm0.10$          \\
		Recurrent-step-conditioned adapter            & $20.6\pm2.0$              & $2.37\pm0.09$                                 & $T=48$  & $24.3\pm2.5$            & $1.43\pm0.19$          \\
		\SCSE{}                                       & $\mathbf{18.2\pm0.1}$     & $\mathbf{2.72\pm0.32}$                        & $T=48$  & $\mathbf{19.8\pm0.8}$   & $\mathbf{1.69\pm0.15}$ \\
		\midrule
		\multicolumn{6}{l}{95.6M WikiText-103 5k}                                                                                                                                              \\
		Looped Transformer baseline                   & $23.0\pm0.3$              & $1.60\pm0.12$                                 & $T=32$  & $23.4\pm0.7$            & $1.20\pm0.25$          \\
		Tuned adapter, $s=0.35$                       & $18.5\pm1.4$              & $2.86\pm0.04$                                 & $T=32$  & $19.9\pm1.3$            & $2.13\pm0.08$          \\
		Recurrent-step-conditioned adapter            & $18.0\pm0.8$              & $2.69\pm0.17$                                 & $T=32$  & $19.9\pm1.6$            & $1.92\pm0.08$          \\
		\SCSE{}                                       & $\mathbf{16.9\pm0.8}$     & $\mathbf{3.42\pm0.16}$                        & $T=32$  & $\mathbf{17.1\pm1.1}$   & $\mathbf{2.55\pm0.16}$ \\
		\bottomrule
	\end{tabular}
	\caption{Out-of-domain LAMBADA last-word completion benchmark.  We report target-token completion PPL and target-token accuracy on the held-out answer span.  The Depth column shows the comparison depth used in each row.  The Unshared Transformer-8 row is an absolute fixed-depth reference rather than a parameter-matched shared-block control.  Exact last-word accuracy stays near the floor at these scales, with group means at most about $0.13\%$, so we tabulate the more resolved target-token completion metrics; exact last-word accuracy is tracked in the evaluation records.}
	\label{tab:lambada}
\end{table*}

\subsection{Additional Scale and Corpus Evaluations}
\label{app:additional-scale-corpus}

The C4 scale runs reuse the selected shared-block settings without additional large-corpus retuning, matching the tokenizer, optimizer, context length, loop-depth sampler, training-token budget, and evaluation depths.

The C4 scale evaluations use 136.5M shared-block models and 139.2M recurrent-step-conditioned control models with $d=1664$, 26 attention heads, $d_{\rm ff}=6656$, a context length of 128, a batch size of 32, and an AdamW learning rate of $2.5\times 10^{-4}$.  These evaluations follow the C4 split convention defined in the experimental setup details above.  The 5k- and 50k-step C4 runs correspond to training-token budgets of 20.48M and 204.8M tokens per run, respectively, and C4 test-slice PPL is reported at selected depths $T\in\{4,8,16,24,32,48,96\}$.

\subsubsection{Longer-Training and Web-Corpus Evaluations}
\label{app:scale-real-corpus}

To test whether the source-centered advantage persists beyond the 5k-step WikiText-103 setting, Appendix Table~\ref{tab:long10k_95m} reports longer WikiText-103 training at 95.6M parameters, while direct 22M OpenWebText and C4 pretraining are reported in Appendix Tables~\ref{tab:owt_train} and~\ref{tab:c4_train}.  The additional scale evaluation below trains 136.5M \SCSE{} models on C4 and compares the 136.5M \SCSE{} models with tuned and recurrent-step-conditioned adapter controls.  The largest recurrent-step-conditioned control has 139.2M parameters, and no additional large-corpus retuning is applied.

Appendix Table~\ref{tab:add-scale-real-corpus} extends the web-corpus comparison to the 136.5M C4 5k-step setting.  \SCSE{} improves on the tuned adapter by 5.83 PPL at $T=8$, 6.83 PPL at $T=24$, and 9.52 PPL at $T=48$.  In the same setting, \SCSE{} also improves on the recurrent-step-conditioned adapter by 12.75, 12.53, and 11.10 PPL at the same depths.

\begin{table}[t!]
	\centering
	\scriptsize
	\resizebox{\columnwidth}{!}{
		\begin{tabular}{lrrr}
			\toprule
			                                           & \multicolumn{3}{c}{Test PPL}                                                       \\
			\cmidrule(lr){2-4}
			Method                                     & $T=8$                        & $T=24$                   & $T=48$                   \\
			\midrule
			Tuned adapter, $s=0.35$                    & $227.29\pm1.01$              & $249.40\pm1.57$          & $287.76\pm1.94$          \\
			Recurrent-step-conditioned adapter, 139.2M & $234.21\pm0.27$              & $255.10\pm0.42$          & $289.34\pm1.09$          \\
			\SCSE{}                                    & $\mathbf{221.46\pm0.59}$     & $\mathbf{242.57\pm0.70}$ & $\mathbf{278.24\pm1.53}$ \\
			\bottomrule
		\end{tabular}
	}
	\caption{Additional C4 scale evaluation with 136.5M shared-block models trained for 5k optimization steps.  The recurrent-step-conditioned control models have 139.2M parameters.  \SCSE{} has lower mean PPL than both adapter controls at every reported loop depth.}
	\label{tab:add-scale-real-corpus}
\end{table}

Appendix Table~\ref{tab:add-c4-long50k} shows that, in the 50k-step C4 scale evaluation, the absolute C4 PPL decreases from the 5k-step regime to the 50k-step regime.  At the largest reported point in the training loop-depth range, $T=8$, the tuned adapter remains competitive, and its PPL is 0.25 lower than \SCSE{}'s.  At deeper extra-loop depths, \SCSE{} has the lowest mean PPL, improving on the tuned adapter by 1.38 PPL at $T=32$, 5.42 PPL at $T=48$, and 35.83 PPL at $T=96$.  At the same deeper extra-loop depths, \SCSE{} also improves on the recurrent-step-conditioned adapter by 5.12, 7.52, and 0.72 PPL.  These C4 comparisons use the fixed protocol defined above.

\begin{table*}[t!]
	\centering
	\scriptsize
	\begin{tabular}{lrrrrr}
		\toprule
		                                                   & \multicolumn{5}{c}{Test PPL}                                                                                                             \\
		\cmidrule(lr){2-6}
		Setting                                            & $T=8$                        & $T=16$                   & $T=32$                   & $T=48$                   & $T=96$                   \\
		\midrule
		C4 tuned adapter, $s=0.35$, 136.5M, 50k            & $\mathbf{93.86\pm0.23}$      & $\mathbf{101.79\pm0.44}$ & $132.03\pm1.56$          & $178.50\pm3.71$          & $405.20\pm19.35$         \\
		C4 recurrent-step-conditioned adapter, 139.2M, 50k & $94.69\pm0.31$               & $103.46\pm1.00$          & $135.77\pm5.28$          & $180.60\pm10.67$         & $370.09\pm36.81$         \\
		C4 \SCSE{}, 136.5M, 50k                            & $94.11\pm0.28$               & $101.80\pm0.20$          & $\mathbf{130.65\pm1.15}$ & $\mathbf{173.08\pm2.68}$ & $\mathbf{369.37\pm9.67}$ \\
		\bottomrule
	\end{tabular}
	\caption{Longer C4 scale pretraining.  All models train for 50k optimization steps, corresponding to 204.8M training tokens per training run.  \SCSE{} is not uniformly best within the training loop-depth range.  The tuned adapter is slightly better at the largest reported point in that range, $T=8$, and the shallow extra-loop point $T=16$ is essentially tied.  At deeper extra-loop depths, \SCSE{} has the lowest mean PPL at $T=32$, $T=48$, and $T=96$.}
	\label{tab:add-c4-long50k}
\end{table*}

\subsubsection{When Extra Loops Help on Held-Out Web Text}
\label{app:extra-loop-help-real}

In-domain mean PPL often reaches its minimum near the boundary of the training loop-depth range, after which \SCSE{} maintains the strongest absolute shared-block quality even though the relative degradation slope is model dependent.  The held-out OpenWebText evaluation examines whether additional recurrent passes can also improve quality on a natural web-text distribution for WikiText-103-trained weights.  Under this distribution shift, the optimal loop depth can move beyond the training loop-depth range.

The held-out OpenWebText columns in Appendix Table~\ref{tab:heldout_transfer} show the same distribution-dependent extra-loop pattern without retabulating the main transfer benchmark.  Using unrounded means, at 50M, \SCSE{} improves from $3.685\pm0.067$k PPL at $T=8$ to $3.469\pm0.218$k PPL at $T=48$, while the recurrent-step-conditioned adapter degrades from $3.656\pm0.332$k to $4.015\pm0.361$k.  At 95.6M, \SCSE{} improves from $3.602\pm0.196$k to $3.421\pm0.305$k, while PPL increases by $0.071$k for the tuned adapter and by $0.140$k for the recurrent-step-conditioned adapter.  Thus, source-centered recurrence can benefit from extra loops under this web-text distribution shift, whereas the additive-source-conditioning controls degrade.

Appendix Tables~\ref{tab:add-causal-forcing},~\ref{tab:add-scale-real-corpus}, and~\ref{tab:add-c4-long50k}, together with the anchor-design results in Table~\ref{tab:add-anchor-design} and the held-out transfer results in Appendix Table~\ref{tab:heldout_transfer}, support the practical relevance of the source-centered design in these settings.  The learned anchor gives the lowest PPL among the tested source-centered anchor choices, and capacity-matched adapter baselines do not close the gap to \SCSE{}.  Among the tested intervention architectures, the separately trained additive forcing-subtraction control gives the largest deep extra-loop improvement, while mask-only changes leave PPL nearly unchanged.  Because the additive subtraction control uses a different injected source than the tuned adapter does, is trained independently, and doubles the body applications per step, we treat these controls as design comparisons rather than mechanistic attributions.

\subsection{Diagnostic WikiText-2 Tables}
\label{app:diagnostic-wt2}
Appendix Table~\ref{tab:ablations} reports results from the 22M, 1200-step WikiText-2 diagnostic.  These results compare stabilization, capacity, and source-centered controls, while the main text emphasizes stronger baselines on WikiText-103, larger-scale evaluations, adaptive-depth evaluations, and transfer comparisons.

\subsubsection{Stabilization and Capacity Controls}
Appendix Table~\ref{tab:ablations} compares \SCSE{} with stabilization and capacity controls at $T=8$ and at deep extra-loop depths.

First, periodic normalization reset is a strong baseline within the training loop-depth range, reaching $387.8\pm0.9$ PPL at $T=8$, but the reset baseline degrades sharply at deep extra-loop depths.  Second, \SCSE{} reaches $355.2\pm2.4$ at $T=8$ and has the lowest PPL at $T=24$ and $T=48$.

The replicated controls show that generic stabilization alone does not account for the source-centered gain.  Periodic normalization can improve the operating point within the training loop-depth range while suppressing useful computation at larger recurrent depth, whereas the source-centered and capacity-matched rows remain substantially stronger across the reported depths.

\begin{table}[t!]
	\centering
	\scriptsize
	\resizebox{\columnwidth}{!}{
		\begin{tabular}{lrrr}
			\toprule
			Method                                   & $T=8$ PPL              & $T=24$ PPL             & $T=48$ PPL             \\
			\midrule
			\multicolumn{4}{l}{Baseline and stabilization controls}                                                             \\
			Looped Transformer baseline              & $413.2\pm3.2$          & $426.9\pm3.7$          & $451.0\pm9.0$          \\
			Residual step scale $s=0.35$             & $414.0\pm2.0$          & $429.3\pm2.8$          & $456.3\pm7.8$          \\
			Learned scalar gate                      & $413.5\pm3.3$          & $427.0\pm3.9$          & $451.1\pm9.4$          \\
			Channel recurrent gate                   & $402.2\pm3.1$          & $421.2\pm3.0$          & $466.5\pm4.4$          \\
			Periodic normalization reset             & $387.8\pm0.9$          & $499.0\pm6.5$          & $1045.2\pm16.1$        \\
			\addlinespace[2pt]
			\midrule
			\multicolumn{4}{l}{Capacity and source-centered rows}                                                               \\
			Capacity-matched adapter                 & $360.0\pm1.3$          & $374.8\pm1.3$          & $397.3\pm0.9$          \\
			Parameter-matched unshared Transformer-4 & $358.1\pm2.9$          & not evaluated          & not evaluated          \\
			\SCSE{}                                  & $\mathbf{355.2\pm2.4}$ & $\mathbf{370.1\pm3.2}$ & $\mathbf{393.9\pm3.7}$ \\
			\bottomrule
		\end{tabular}
	}
	\caption{Stabilization and capacity controls on WikiText-2, 22M, 1200 steps.  All shared-block models use train-time loop-depth sampling from 1 to 8.  Periodic normalization reset is a strong baseline within the training loop-depth range but degrades at deep extra-loop depths.  \SCSE{} has the lowest PPL across the reported loop depths in the diagnostic table.}
	\label{tab:ablations}
\end{table}

\subsection{Forcing-Bias Intervention Ablations}
\label{sec:causal-forcing-ablation}

The forcing-bias intervention study compares a masked SC-Cond reference with three direct controls.  We use SC-Cond to denote this analysis family, distinct from the main \SCSE{} method, whose masked source-centered recurrence passes only the deviation to the recurrent block.  The masked SC-Cond reference keeps the source-conditioned block input and changes the branch within the zero-deviation threshold region through the mask.  One SC-Cond control keeps the same source-conditioned form but removes that mask.  For the deterministic evaluation map, a subtractive SC-Cond control uses the raw update $G_\theta(\Delta_t+c_{\rm sc}(e))-G_\theta(c_{\rm sc}(e))$ before applying the residual step scale, thereby removing the raw anchor response throughout the active update rather than only setting the branch within the zero-deviation threshold region.  Here $c_{\rm sc}(e)$ is a local SC-Cond conditioning vector.  The additive control uses the embedding source $e$ and subtracts its zero-deviation update.  The term $\tau_t$ denotes an optional recurrent-step conditioning term and is zero for controls without step conditioning:
\begin{align}
	\Delta_{t+1}
	 & =
	\Delta_t + s u_t^{\mathrm{fs}},\notag \\
	u_t^{\mathrm{fs}}
	 & =
	\Block_\theta(h^\star+\Delta_t+\alpha W_{\rm in}e+\tau_t)
	\notag                                \\
	 & -
	\Block_\theta(h^\star+\alpha W_{\rm in}e+\tau_t).
	\label{eq:additive-forcing-subtraction}
\end{align}
The additive forcing-subtraction control removes the exact anchor response from its deterministic evaluation map while retaining embedding-source additive injection.  The control uses the same learned anchor and initialization-module design as the adapter controls, but the architecture is not a subtraction-only version of the tuned adapter: Eq.~\ref{eq:additive-forcing-subtraction} injects $W_{\rm in}e$, whereas the tuned adapter injects $W_{\rm in}h^\star$.  Moreover, during training, the two body calls use independent dropout realizations, so their stochastic outputs need not cancel exactly at $\Delta_t=0$.  Exact cancellation applies to the dropout-disabled evaluation map measured in Appendix Table~\ref{tab:add-causal-forcing}.  The resulting forcing-subtraction architecture provides a constrained-injection comparison under the same WikiText-103 training protocol, related to the stable-injection direction represented by Parcae \citep{prairie2026parcae}.  The relation is at the level of injected-source stability rather than the proposed diagnostic: Parcae constrains residual-stream input injection to stabilize recurrent depth, whereas \SCSE{} reparameterizes the recurrent map to evolve anchor-relative deviations.  Because the forcing-subtraction control uses a different injected source than the tuned adapter does, is trained as its own architecture, and costs two shared-block body applications per recurrent step, the resulting architecture is an analysis comparison rather than an isolated frozen-weight intervention or a compute-matched replacement for \SCSE{}.  The first three rows with one shared-block body application per recurrent step in Appendix Table~\ref{tab:add-causal-forcing} reuse the matched 22M WikiText-103 reference PPL values from Appendix Table~\ref{tab:strong_followup} and Table~\ref{tab:stepcond_wt103}.  The remaining rows are the additional intervention controls.

\begin{table*}[t!]
	\centering
	\scriptsize
	\begin{tabular}{lrrrrr}
		\toprule
		                                   &                            & \multicolumn{3}{c}{Test PPL} &                                                                                             \\
		\cmidrule(lr){3-5}
		Method                             & Body applications per step & $T=8$                        & $T=24$                   & $T=48$                   & Pointwise-bias energy ratio at $t=47$ \\
		\midrule
		Tuned adapter, $s=0.35$            & 1                          & $156.43\pm0.58$              & $174.40\pm0.55$          & $204.05\pm1.43$          & $1.619$                               \\
		Recurrent-step-conditioned adapter & 1                          & $156.72\pm0.49$              & $175.61\pm0.68$          & $206.13\pm1.37$          & $1.276$                               \\
		\SCSE{}, $s=0.50$                  & 1                          & $155.14\pm0.61$              & $171.12\pm0.76$          & $200.10\pm1.31$          & $0$                                   \\
		SC-Cond, masked                    & 1                          & $154.66\pm0.53$              & $170.94\pm0.78$          & $200.35\pm0.98$          & $0$                                   \\
		SC-Cond, no mask                   & 1                          & $\mathbf{154.65\pm0.52}$     & $170.91\pm0.79$          & $200.31\pm0.98$          & $3.75$                                \\
		SC-Cond, subtractive               & 2                          & $166.57\pm0.52$              & $174.93\pm0.87$          & $183.07\pm1.27$          & $5.54\times 10^{-6}$                  \\
		Additive + forcing subtraction     & 2                          & $156.46\pm0.23$              & $\mathbf{158.86\pm0.45}$ & $\mathbf{160.46\pm0.66}$ & $0$                                   \\
		\bottomrule
	\end{tabular}
	\caption{Forcing-bias intervention controls on WikiText-103.  The ``Body applications per step'' column reports the number of physical shared-block body applications per logical recurrent step.  The pointwise-bias energy ratio is $R_{47}(e)$ from Eq.~\ref{eq:forcing-bias-ratio}, averaged over three stochastic replications; $R_{47}(e)$ can exceed one because state-dependent and pointwise terms can cancel.  The no-mask SC-Cond model has a large measured pointwise bias but similar PPL to the masked version.  In the deterministic evaluation map, the separately trained subtractive controls remove the anchor response throughout the active update and improve performance at the deepest reported extra-loop point, using two body applications per step.}
	\label{tab:add-causal-forcing}
\end{table*}

Appendix Table~\ref{tab:add-causal-forcing} separates masking within the zero-deviation threshold region from subtracting the anchor response throughout the deterministic active map.  Removing the mask from SC-Cond leaves a large measured pointwise-bias ratio but nearly unchanged PPL.  The separately trained subtractive architectures constitute a larger intervention and improve performance in the deep extra-loop regime.  At $T=48$, additive forcing subtraction reaches 160.46 PPL, compared with 204.05 for the tuned adapter and 200.10 for \SCSE{} with one body application per step.  The different injected source, independent training, and two-application cost prevent interpreting the tuned-adapter PPL gap as a subtraction-only or frozen-weight mediation estimate.  \SCSE{} instead uses a zero-preserving source-centered core with one body application.

\subsection{Additional Mechanism and Design Controls}
\label{app:add-exp}

We report an additional recurrent-step-modulation control for the source-centered comparison.  The forcing-bias intervention controls appear in the dedicated ablation above.  Unless a table states otherwise, the WikiText-103 ablations use the 22M, 5k-step shared-block protocol defined in the main WikiText-103 22M shared-block comparisons and the experimental setup.  This protocol uses a context length of 128, train-time loop depths sampled uniformly from $1$ through $8$, and evaluation at $T\in\{4,8,12,16,24,32,48\}$.  The 22M runs use AdamW with a learning rate of $3\times 10^{-4}$, 500 warmup steps, and a weight decay of 0.1.  Test PPL is reported as the mean $\pm$ one standard deviation.  Unless otherwise noted, these ablations use three stochastic replications.  Evaluations shared with the broader numerical suite follow the seed protocol.

For the non-\SCSE{} controls, we tune the residual step scale, the additive-source gain, and the recurrent-step-modulation hyperparameters under this WikiText-103 protocol; the selected values are then fixed across stochastic replications.  The stepwise feature-wise linear modulation (FiLM) adapter, denoted Step-FiLM, uses the modulation scale with the lowest deep extra-loop validation PPL in that sweep.

\subsubsection{Step-Modulated Adapter Control}
\label{app:step-modulated-control}

The Step-FiLM adapter control adds explicit recurrent-step modulation to the tuned additive adapter.  The Step-FiLM adapter uses a learned recurrent-step embedding and applies affine FiLM-style modulation before the shared block.  The modulation scale is selected under the same WikiText-103 protocol as the other adapter hyperparameters.  This control isolates recurrent-step modulation under matched parameter scale, data, loop-depth sampler, and optimizer.

With the selected modulation scale, the Step-FiLM adapter reaches $157.07\pm0.28$ PPL at $T=8$, $178.20\pm0.99$ at $T=24$, and $207.65\pm1.65$ at $T=48$.  The corresponding tuned-adapter, recurrent-step-conditioned, and \SCSE{} baselines are reported in Appendix Table~\ref{tab:strong_followup} and Table~\ref{tab:stepcond_wt103}.  Thus, adding either an explicit recurrent-step signal or a small recurrent-step-dependent FiLM path does not match the PPL of \SCSE{} in this setting.  The tuned adapter remains the lowest-PPL non-\SCSE{} 22M adapter among the controls with one shared-block body application per recurrent step.  \SCSE{} improves on the tuned adapter by 1.29 PPL at $T=8$, 3.28 PPL at $T=24$, and 3.95 PPL at $T=48$.

\subsection{Anchor-Coordinate Forcing-Bias Energy Ratios}
\label{app:forcing-bias-ratio-figure}

Appendix Figure~\ref{fig:forcing_bias_ratio} visualizes the pointwise forcing-bias energy ratios $R_t(e)$ reported in Table~\ref{tab:bias_decomp} across shared-block scales.  When the numerical floor in Eq.~\ref{eq:forcing-bias-ratio} is inactive, $R_t(e)$ is exactly $\norm{b_t(e)}_F^2/\norm{\Delta_{t+1}-\Delta_t}_F^2$.  We compute $R_t(e)$ for each measured batch with a fixed tensor shape and then average the resulting batch-level ratios over measured batches and stochastic replications.  The plot emphasizes that \SCSE{} has a zero pointwise ratio, while additive-source baselines retain order-one or larger late-step ratios.

\begin{figure*}[t!]
	\centering
	\includegraphics[width=0.98\textwidth]{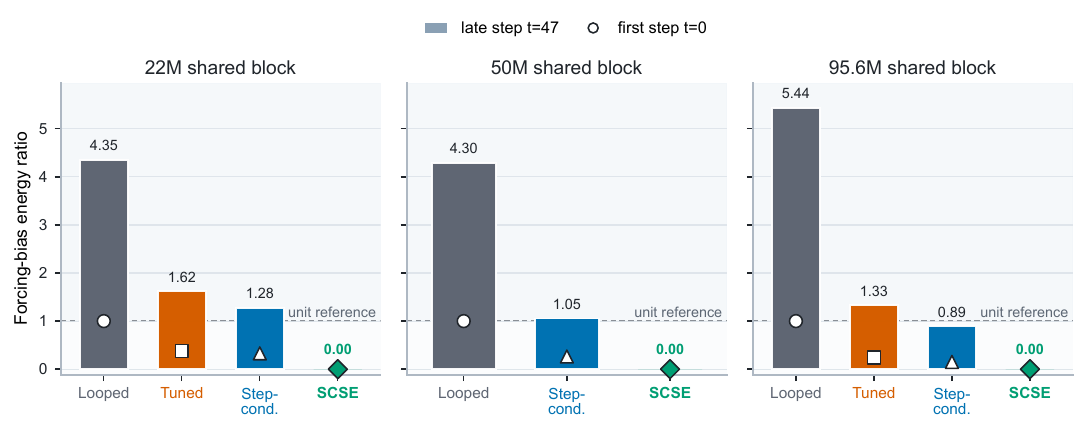}
	\caption{Pointwise forcing-bias energy ratio $R_t(e)$ from Eq.~\ref{eq:forcing-bias-ratio}, using the same method colors as Figure~\ref{fig:wt103-depth-response}.  The three plots group the 22M, 50M, and 95.6M WikiText-103 model weights.  Bars show the late recurrent step $t=47$, and open markers show the first recurrent step $t=0$ using the values in Table~\ref{tab:bias_decomp}.  The dashed line is a unit reference at which the pointwise zero-deviation forcing bias has the same squared energy as the measured net recurrent update.  The dashed line is not the \SCSE{} value.  \SCSE{} has a ratio of $0.00$ at both measured steps, while additive-source baselines retain nonzero pointwise terms whose ratios become order one or larger at late depth.  The ratio diagnoses the local presence of forcing bias relative to the net update, not trajectory accumulation or model quality.}
	\label{fig:forcing_bias_ratio}
\end{figure*}

\subsection{Secondary Deviation-Leak Diagnostic}
\label{app:secondary-condleak}
The main \SCSE{} recurrence in Eq.~\ref{eq:source} has no deviation-leak term ($\lambda_{\rm leak}=0$).  The leak diagnostic adds deviation leak and anchor-conditioned recurrent input to the same masked source-centered update.  The variant uses $\lambda_{\rm leak}=0.02$ and its own learned scalar conditioning gain $\kappa$, initialized to the same configured value $0.15$ as the baseline's separately learned additive-injection gain $\alpha$.  The two trained gains are not constrained to remain equal.  The variant is retained only to document the lower-gain regime near the boundary of the training loop-depth range.  The diagnostic can be slightly better near $T=8$, but the variant is consistently weaker once the loop budget is pushed deeper.

\begin{table*}[t!]
	\centering
	\scriptsize
	\begin{tabular}{llrrl}
		\toprule
		Evaluation                     & Metric     & Leak diagnostic        & \SCSE{}                & Interpretation                                         \\
		\midrule
		WikiText-103 22M fixed depth   & $T=8$ PPL  & $\mathbf{155.0\pm0.6}$ & $155.1\pm0.6$          & Essentially tied at the range boundary                 \\
		WikiText-103 22M fixed depth   & $T=48$ PPL & $227.5\pm2.9$          & $\mathbf{200.1\pm1.3}$ & \SCSE{} is stronger in deep extra-loop evaluation      \\
		WikiText-103 95.6M fixed depth & $T=8$ PPL  & $97.4\pm0.2$           & $\mathbf{96.9\pm0.3}$  & \SCSE{} is slightly stronger at scale                  \\
		WikiText-103 95.6M fixed depth & $T=48$ PPL & $143.0\pm1.5$          & $\mathbf{125.9\pm0.9}$ & \SCSE{} is much stronger in deep extra-loop evaluation \\
		\bottomrule
	\end{tabular}
	\caption{Secondary deviation-leak diagnostic.  The leak variant probes a lower-gain regime near the boundary of the training loop-depth range and is essentially tied with \SCSE{} near $T=8$ but substantially weaker at $T=48$.}
	\label{tab:secondary_condleak}
\end{table*}

\subsection{Beyond Pointwise Bias: Operating Regimes}
\label{app:after-bias-removal}

For the local-sensitivity diagnostics, random-direction gain is the finite-difference directional estimate
\begin{align*}
	\frac{\norm{\Tmap_t(\Delta+\epsilon_{\rm fd} v;e)-\Tmap_t(\Delta;e)}_F}{\epsilon_{\rm fd}},
\end{align*}
with $\norm{v}_F=1$ and $\epsilon_{\rm fd}=10^{-3}$, averaged over four random probe directions, measured batches, and stochastic replications.  Appendix Table~\ref{tab:operating_point_mechanism} reports the corresponding 22M diagnostics for \SCSE{} and the tuned-adapter control.

The operating-point diagnostics complement the pointwise-bias measurement.  \SCSE{} has zero pointwise bias, remains near-neutral under the sampled late-step gain diagnostic, and delivers the best deep extra-loop quality, while the tuned adapter retains nonzero forcing bias despite its lower anchor energy.

The same pattern persists across the broader evaluations.  Recurrent-step-conditioned adapters become strong baselines at 50M and 95.6M, but they still trail \SCSE{} at large in-domain depths and on held-out OpenWebText, held-out C4, and LAMBADA.

Alongside the pointwise-bias diagnostic, local-sensitivity measurements compare the learned one-step maps under the 22M-parameter, 5k-step WikiText-103 protocol.  The looped Transformer baseline starts with a $t=0$ random-direction gain of $3.98\pm0.05$.  The random-direction-gain mean indicates that the sampled perturbations are strongly amplified on average at that operating point; the statistic neither estimates the largest singular value nor certifies expansion in every direction.  The tuned adapter reduces the random-direction gain to $1.135\pm0.002$ and the recurrent-step-conditioned adapter to $1.135\pm0.004$ at $t=0$, with both becoming essentially neutral at deeper steps.

\SCSE{} starts slightly lower at $t=0$ with $1.118\pm0.003$ and settles to a slightly above-unit late-step sampled-gain regime, reaching $1.00108\pm0.00003$ at $t=47$.  Appendix Table~\ref{tab:secondary_condleak} reports the lower-gain leak diagnostic separately because the diagnostic is useful mainly near the boundary of the training loop-depth range and is weaker at larger loop depths.

These 22M diagnostics compare sampled local directional gains; they neither estimate the Jacobian operator norm nor establish local or global contraction.  They show that the source-centered reparameterization neutralizes zero-deviation forcing while retaining nonzero recurrent sensitivity away from the anchor.

The fixed-depth evaluations across trained and extra-loop depths clarify the operating range of the method.  \SCSE{} usually provides the best deep extra-loop PPL.

The adaptive-depth evaluations in Appendix Tables~\ref{tab:adaptive_wt103} and~\ref{tab:adaptive_scale} sharpen the operating picture: \SCSE{} remains the best reported loop-budgeted shared-block method at target mean loop budgets of 8, 16, and 24 across the 22M, 50M, and 95.6M comparisons.

\subsection{Operating-Point Diagnostics}
\label{app:operating-point-diagnostics}
Appendix Table~\ref{tab:operating_point_mechanism} reports the diagnostics referenced in the main mechanistic analysis.  The table keeps the source-centered comparison focused on the reported \SCSE{} method.

\begin{table}[t!]
	\centering
	\scriptsize
	\resizebox{\columnwidth}{!}{
		\begin{tabular}{lrrrr}
			\toprule
			                        & PPL                    & \multicolumn{3}{c}{Late-step diagnostics ($t=47$)}                              \\
			\cmidrule(lr){2-2}\cmidrule(lr){3-5}
			Method                  & $T=48$                 & Gain                                               & Bias ratio & Anchor energy \\
			\midrule
			Tuned adapter, $s=0.35$ & $204.1\pm1.4$          & $1.00035$                                          & $1.619$    & $9.84$        \\
			\SCSE{}, $s=0.50$       & $\mathbf{200.1\pm1.3}$ & $1.00108$                                          & $0.000$    & $18.10$       \\
			\bottomrule
		\end{tabular}
	}
	\caption{Mechanism diagnostics on WikiText-103, 22M, 5k steps.  The source-centered row has zero pointwise response and a zero-preserving raw core, so the remaining comparison is governed by the sampled random-direction gain regime and how much state-dependent computation survives at late recurrent steps.  Anchor energy is the mean-squared anchor-relative deviation at the late state, not a lower-is-better quality metric.}
	\label{tab:operating_point_mechanism}
\end{table}

\subsection{Related Work}
\label{app:related-work}

\paragraph{Looped and recurrent-depth Transformers.} \citet{dehghani2019universal} introduced recurrent self-attention with optional adaptive computation, while studies of looped Transformers and recurrent algorithms examine recurrence as a programmable, latent-reasoning, or easy-to-hard extrapolation mechanism \citep{giannou2023looped,schwarzschild2021learn,bansal2022algorithm,fan2025looped,saunshi2025latent}.  Recent looped Transformers and elastic-depth variants revive recurrent depth as a test-time compute axis.  Huginn demonstrates arbitrary latent-depth unrolling \citep{geiping2025huginn}.  Ouro scales up open looped Transformer pretraining with learned depth allocation and entropy regularization \citep{zhu2025ouro}.  LoopFormer studies budget-conditioned loop trajectories \citep{jeddi2026loopformer}, and Hyperloop studies parameter-efficient looped Transformer training \citep{zeitoun2026hyperloop}.  Depth-recurrent Transformer work on compositional generalization is another nearby algorithmic-depth reference point \citep{chen2026thinking}.

\paragraph{Loop-depth failures and recurrent stability.} Several recent studies share our broad concern that hidden-state dynamics can become unreliable when recurrent depth is extended, but they target different mechanisms.  Parcae models the looped residual stream as a forced dynamical system and constrains repeated injection for stable scaling \citep{prairie2026parcae}.  \citet{yang2026stars} regularize the recurrent Jacobian's spectral radius, while \citet{park2026loopus} learn input-dependent selective gates to mitigate hidden-state drift.  \citet{sharma2026readout} identify a readout blind spot through which scale-invariant supervision can leave the hidden-state norm uncontrolled.  These approaches share the broad phenomenon of recurrent instability or drift with this paper.  They do not, to our knowledge, define and measure the anchor-response quantity $b_t(e)=\Tmap_t(0;e)$ or enforce its pointwise removal as the primary architectural condition.  Conversely, the local anchor-consistency condition is not a general stability theorem; the condition neither constrains the full Jacobian nor subsumes the readout blind spot.

Input injection itself is not intrinsically harmful.  \citet{blayney2026mechanistic} find that injection can encourage fixed-point convergence in looped reasoning models, and path-independent equilibrium models use input injection to improve test-time-compute generalization \citep{anil2022path}.  These results are consistent with our contractive case, in which bounded forcing produces a bounded response.  Our narrower claim concerns forcing components that recurrent propagation preserves and state-dependent updates do not cancel; the pointwise anchor-response diagnostic and the exact bias-subtraction counterfactual used to state those conditions are the points of difference.

\paragraph{Cross-layer parameter sharing.} Cross-layer parameter sharing itself is not new.  Cross-layer-sharing approaches reduce parameter memory in bidirectional encoder models, recurrently stacked translation layers, Trellis-style sequence models with weight tying across depth and direct input injection, and broader parameter-sharing settings \citep{lan2020albert,dabre2019recurrent,bai2019trellis,takase2023lessons}.  The models studied here instead repeatedly apply one recurrent block to a changing state distribution and then evaluate that same set of trained weights at loop depths beyond the training loop-depth range.  Applying the same trained weights beyond the training loop-depth range introduces a dynamical evaluation concern beyond parameter sharing alone.  The same parameters must act on a sequence of hidden-state distributions whose anchor-relative deviation, zero-deviation forcing bias, and measured random-direction gain can drift with recurrent step.  A deeper unshared Transformer can specialize each layer to a particular depth index, while the shared block in a looped Transformer must learn a single transition rule that remains useful under both trained and extrapolated loop depths.

\paragraph{Residual stabilization and dynamical views.} Our stabilization view is related to interpretations of highway and residual networks as iterative or recurrent estimators \citep{srivastava2015highway,liao2016bridging,greff2017highway,jastrzebski2018residual}.  The stabilization view is also related to residual scaling and initialization methods such as Fixup, ReZero, DeepNorm, and LayerScale \citep{zhang2019fixup,bachlechner2021rezero,wang2022deepnet,touvron2021going}.  Other connections include Transformer residual-dependency analyses \citep{liu2020understanding}, normalization choices and pre-norm analyses \citep{ba2016layer,zhang2019rmsnorm,nguyen2019transformers,xiong2020layer}, stable recurrent-model analyses \citep{pascanu2013difficulty,miller2019stable,chang2019antisymmetric,erichson2021lipschitz}, and dynamical-systems and continuous-depth views of residual networks \citep{haber2017stable,chen2018neuralode}.  These approaches control update scale, normalization, residual dependency, or local spectral behavior.  \SCSE{} is complementary because the method reparameterizes the recurrent vector field around an input-conditioned anchor rather than directly controlling these stabilization quantities.

\paragraph{Equilibrium and adaptive computation.} Equilibrium and adaptive-computation models provide another nearby comparison.  Deep Equilibrium Models solve for an implicit fixed point \citep{bai2019deq}, and path-independent equilibrium models regularize trajectories toward input-conditioned equilibria while targeting generalization across iteration counts \citep{anil2022path}.  Adaptive computation, probabilistic halting, depth-adaptive Transformers, depth-on-demand pruning, early-exit LM methods, and token-level dynamic-depth routing decide how much computation to spend per example or token \citep{graves2016act,banino2021pondernet,elbayad2020depth,fan2020layerdrop,schuster2022calm,raposo2024mixture}.  Here, we study explicit recurrent unrolls, report fixed-depth and offline loop-budgeted evaluations, and use adaptive-depth to show that source-centered shared blocks expose a reusable inference-time depth knob.

\paragraph{Positioning of this study.} All these systems motivate the recurrent-depth axis while varying in scale, data, loop placement, training objective, adaptive depth allocation, and stability parameterization.  Within this broader recurrent-depth design space, we isolate the full-state, shared-block regime in which every token state is updated by the same block over recurrent depth.  The stability work above shares the broad problem of hidden-state evolution becoming unreliable with extra loops.  Our narrower contribution is to define the anchor-dependent pointwise diagnostic $b_t(e)$, distinguish the pointwise diagnostic from the raw pre-mask response and the finite-horizon counterfactual response, state sufficient propagation and loss-alignment conditions under which the latter becomes harmful, and test a source-centered reparameterization with a zero-preserving core and an exact masked anchor boundary.  The matched controls cover ingredients most likely to explain the gain, including capacity-matched adapters, recurrent-step conditioning, learned step modulation, residual-scale tuning, parameter-matched unshared controls, forcing-subtraction architectures, and absolute unshared-depth references.

\subsection{Limitations}
The experiments isolate controlled shared-block LM settings rather than frontier-scale LM pretraining.  The largest source-centered model has 136.5M trainable parameters, and the largest matched control has 139.2M trainable parameters.  The longest direct C4 suite uses 204.8M training tokens per trained model.  Billion-parameter scaling-law behavior and billion-token web-corpus training therefore remain open.

Zero-deviation forcing bias is defined relative to a chosen, model-specific anchor.  The pointwise condition $b_t(e)=0$ says that zero deviation is a fixed point of $\Tmap_t(\cdot;e)$, equivalently that this anchor is a one-step fixed point at recurrent step $t$.  This pointwise condition is not invariant under changes to the anchor and does not imply that the anchor is an optimal endpoint.  For a masked map, the pointwise response can also differ from the raw active-branch response, as the no-mask SC-Cond control demonstrates.  \SCSE{} removes both responses in the reported zero-preserving core, but \SCSE{} also changes the learned recurrent vector field by applying the shared core to deviations, so the method should be viewed as an architectural reparameterization rather than a lossless coordinate rewrite.

The exact finite-horizon theorem, its contraction and coherence corollaries, and the task-loss proposition give sufficient conditions for a bias-subtraction response to remain bounded, accumulate, and increase task loss.  These results do not constitute a global theorem for stability or PPL.  \SCSE{} removes the local pointwise and raw anchor responses in the reported core but does not constrain off-anchor Jacobians, the state-dependent residual, or readout sensitivity.  Figure~\ref{fig:zdfb-empirical} is a one-replication geometric case study that uses separate projections for the two plots.  Table~\ref{tab:bias_decomp} provides replicated pointwise measurements.  Appendix Table~\ref{tab:add-causal-forcing} shows both that a nonzero pointwise response can be benign in one control and that separately trained subtraction architectures can substantially improve deep extra-loop PPL in others.  For the subtractive controls, exact cancellation refers to the dropout-disabled evaluation map; during training, independent dropout realizations in the two body calls need not cancel.  Because these controls are retrained, require an extra body application, and, in the additive case, use a different injected source than the tuned adapter does, they do not provide a subtraction-only or frozen-weight causal estimate.  Establishing how often the sufficient propagation and readout-alignment conditions hold across architectures, anchors, and scales remains open.
\end{document}